\newcommand{\titleText}{T-ARC: Topology-Aware Randomized Clustering \\ via Distributionally Robust Stochastic Block Models}

\newcommand{\HeadingText}{T-ARC}

\documentclass[11pt]{article}
\usepackage{geometry}
\usepackage[T1]{fontenc}
\usepackage[utf8]{inputenc}
\usepackage{graphicx}
\usepackage{amssymb,amsthm,mathtools}
\usepackage{hyperref}

\usepackage{enumitem,multirow,array}
\usepackage{hhline}
\usepackage{booktabs}
\usepackage{adjustbox}
\usepackage{subcaption}

\usepackage[linesnumbered,ruled,vlined,noline]{algorithm2e}
\SetKwInput{KwInput}{Input}
\SetKwInput{KwOutput}{Output}

\usepackage{comment,soul}
\usepackage[normalem]{ulem}
\usepackage{cleveref}
\crefname{section}{§}{§§}
\Crefname{section}{§}{§§}

\usepackage{bm}                 
\DeclareMathOperator*{\argmin}{\textrm{argmin}\,} 
\def\diag            {\textrm{diag}}     
\def\Diag            {\textrm{Diag}} 	 
\def\ones            {\pmb{1}}            
\def\st              {\,\textrm{s.t.}\,} 
\def\tr              {\textrm{Tr}}       
\def\IE {\mathbb{E}} 		
\def\IR  {\mathbb{R}} 	    
\def\IRnn{\IR^{n \times n}} 

\def\cB { \mathcal{B}}

\def\cG { \mathcal{G}}

\def\cJ { \mathcal{J}}
 
\def\cL { \mathcal{L}}

\def\cX { \mathcal{X}}

\newtheorem{theorem}{Theorem}

\newtheorem*{observation}{Observation}

\usepackage{fancyhdr}

\usepackage{authblk}
\author[1]{Serena Grazia De Benedictis \thanks{Corresponding author: \texttt{serena.debenedictis@uniba.it}}}
\author[2]{Andersen Ang}
\author[1]{Nicoletta Del Buono}
\author[1]{Flavia Esposito}
\author[3]{Laura Selicato}
\affil[1]{Department of Mathematics, University of Bari Aldo Moro, Italy}
\affil[2]{Electronics and Computer Science Department, University of Southampton, UK}
\affil[3]{National Research Council (CNR), Water Research Institute (IRSA), Italy}

\title{\Large \titleText \vspace{-2ex}}
\date{ \vspace{-3ex}}
\begin{document}\maketitle

\newcommand{\sdc}[1]{(\textbf{Serena's comment:} #1)}
\newcommand{\sdi}[1]{#1}

\newcommand{\aac}[1]{(\textbf{Andersen's comment:} #1)}
\newcommand{\aai}[1]{#1}

\newcommand{\ndbc}[1]{(\textbf{Nicoletta's comment:} #1)}
\newcommand{\ndbi}[1]{#1}

\newcommand{\fec}[1]{(\textbf{Flavia's comment:} #1)}
\newcommand{\fei}[1]{#1}

\newcommand{\lsc}[1]{(\textbf{Laura's comment:} #1)}
\newcommand{\lsi}[1]{#1}

\begin{abstract}
In this work, we introduce a new clustering method, namely \textbf{T-ARC} (Topology-Aware Randomized Clustering), that corrects the geometric bias of K-means by embedding topological information directly into the optimization objective. Building on the assumption that the data admits an underlying hidden structure modeled via a latent graph, the idea is to uncover this information through the interplay between the standard K-means data-fidelity term and a graph-cut penalty, which discourages cluster assignments inconsistent with the connectivity structure of the data.

To render this coupling tractable, the latent graph is modeled as a random realization from a Stochastic Block Model (SBM), whose scalar parameter is optimized within a Distributionally Robust Optimization (DRO) framework, yielding a closed-form proximal update. Both SBM and DRO are informed by a persistence-based similarity matrix derived from zero-dimensional persistent homology ($H_0$), which translates the multiscale connectivity structure of the data into a pairwise topological prior.
The overall optimization proceeds via Block Coordinate Descent; convergence is established through a global Lyapunov functional: the deterministic blocks satisfy monotonic descent, while the stochastic graph update satisfies descent in expectation, so that the expected energy converges.

Experiments on synthetic datasets with non-convex geometries and on random subsets of Fashion-MNIST show that T-ARC recovers latent topological structures where K-means fails, achieving the highest accuracy on curved and interleaved clusters while remaining competitive, and markedly more stable than K-means, on real data.
\end{abstract}

\tableofcontents


\section{Introduction}

In the big data era, the ability to extract meaningful patterns from complex, high-dimensional datasets is a cornerstone of data analysis. Unsupervised learning (clustering in particular) plays a pivotal role by uncovering hidden structures within data without the need for labeled examples. Among clustering algorithms, K-means \cite{lloyd1982,jain2010} remains a widely adopted method, owing to its simplicity, scalability, and broad empirical effectiveness. However, K-means and most of its variants rely fundamentally on Euclidean distances and implicitly assume that clusters are convex and roughly spherical in shape. This geometric bias limits their ability to recover clusters that are elongated, curved, or interleaved: configurations that arise routinely in real-world high-dimensional data \cite{kmeans_limitations}.

This motivates a shift from a purely geometric approach toward a topological perspective in which the connectivity structure of the data becomes the primary object of analysis. Topology offers a natural language to formalize this intuition: topological methods analyze the \emph{shape} of data by encoding structural properties such as connectedness and multi-scale organization, features that are invariant under continuous deformations and thus robust to data geometric distortions \cite{SelMATCOM}. Incorporating such structural information directly into a clustering objective is thus a promising direction to overcome the limitations of purely metric-based methods \cite{carlsson2009,vonluxburg2007}.

We propose a new clustering method called \textbf{T}opology-\textbf{A}ware \textbf{R}andomized \textbf{C}lustering (\textbf{T-ARC}) that pursues precisely this direction. The key idea of T-ARC is to assume that the data admits a latent graph structure, and to complement the standard K-means data-fidelity term with a graph-cut term \cite{huang2025inhomogeneous}, penalizing cluster assignments inconsistent with the connectivity structure of the data. This coupling effectively reveals this latent structure by promoting cluster boundaries that respect the topological organization of the data, rather than its metric arrangement alone, thereby correcting the geometric bias of K-means.

Most existing approaches that incorporate graph information into clustering rely on a similarity graph estimated once from the data and held fixed throughout the optimization \cite{cai2011graph,vonluxburg2007}. T-ARC departs from this paradigm by treating the latent graph itself as a variable: rather than regularizing against a fixed Laplacian, we model it as a random realization from a parametric family and optimize its distribution jointly with the cluster assignments, under a distributionally robust criterion that accounts for uncertainty in the graph itself.

Formalizing this idea in a tractable way is non-trivial. The connectivity graph over the data points is not known \emph{a priori}, and the space of all admissible graphs is combinatorially intractable, due to its dimension. To address this, we model the latent graph as a random realization from a parametric Stochastic Block Model (SBM) \cite{SBM}, whose parameters are optimized within a Distributionally Robust Optimization (DRO) framework \cite{DRO}. This formulation yields closed-form update rules, solved efficiently via a proximal point method \cite{proximal_point_method}, and ensures robustness to uncertainty in the graph distribution.

Central to the framework is a \emph{persistence-based similarity matrix}~$\bm{S}$, constructed from the zero-dimensional persistent homology of the data~\cite{edelsbrunner2010,zomorodian2005}. It plays a twofold role: parametrizing the SBM edge-probability matrix and defining the center of the DRO ambiguity set, acting in both cases as a structural prior on the latent graph distribution. Unlike conventional similarity measures, which capture only local, pairwise relationships, $\bm{S}$ encodes the multiscale connectivity structure of the data --- provably stable with respect to input perturbations \cite{cohen-steiner2007} --- and propagates it, through the graph-cut term, directly into the clustering objective. This approach instantiates the paradigm of \textit{Topological Machine Learning} (TML) \cite{hensel2021} in the context of clustering, where topological information enters not as an auxiliary signal but as a structural prior that shapes the optimization itself.

The overall optimization proceeds via Block Coordinate Descent (BCD) \cite{BCD}, alternating between updates of the cluster assignment matrix, the centroids, and the graph parameters. While the updates for the cluster assignments and centroids enjoy standard convexity properties, the stochastic nature of the graph update requires a dedicated analysis. We provide a convergence analysis based on a global Lyapunov functional: every block update, including the proximal update of the stochastic graph parameter, decreases its expected value, which is therefore convergent, while the successive differences of all blocks vanish. In summary, this work makes the following contributions:
\begin{itemize}[leftmargin=*]\setlength{\itemsep}{0pt}
  \item A clustering framework (T-ARC) that corrects the geometric bias of K-means by embedding topological information into the optimization objective via a graph-cut term.
  \item A persistence-based similarity matrix grounded in the zero-dimensional persistent homology, serving as a structural prior that anchors both the SBM parametrization and the DRO ambiguity set to the multiscale connectivity structure of the data.
  \item A tractable optimization strategy that jointly learns the graph structure and cluster assignments by reformulating graph learning as DRO over SBM, leading to closed-form updates and convergence guarantees in expectation via a global Lyapunov analysis.
\end{itemize}
Finally, we validate T-ARC on synthetic datasets with complex geometric and topological structures and on random subsets of Fashion-MNIST \cite{fashion_MNIST_dataset}, demonstrating competitive performance against established baselines.

The remainder of this paper is organized as follows. 
\Cref{sec:formulation} presents the mathematical formulation of the T-ARC framework, detailing the integration of the graph-cut term with the K-means objective. 
\Cref{sec:optimization_strategy} describes the block coordinate descent optimization strategy, including the closed-form updates for the cluster assignments, centroids, and the stochastic graph parameter via the proximal point method. 
\Cref{sec:convergence} provides the convergence analysis, establishing the descent in expectation of a global Lyapunov functional and the convergence of its expected value. 
\Cref{sec:persistence_based_similarity_matrix} defines the construction of the persistence-based similarity matrix derived from zero-dimensional persistent homology, which serves as the structural prior for the model. 
\Cref{sec:implementation} presents and discusses the experimental results on both synthetic datasets with complex topologies and random subsets of the Fashion-MNIST dataset. 
Finally, \Cref{sec:conclusion} concludes the paper and outlines promising directions for future work. 
The feasibility constraints on the Stochastic Block Model parameter $b$ are derived in Appendix~\ref{sec:appendix_constraints}.

\section{Mathematical Problem Formulation}\label{sec:formulation}

Let $\bm{X} = [\bm{x}_1,\dots,\bm{x}_n]^\top \in \IR^{n \times d}$ be the data matrix, where each row $\bm{x}_i \in \IR^d$ represents a data point, and $n$ is the number of samples. Our goal is to partition the dataset $\{\bm{x}_1,\dots,\bm{x}_n\}$ into $k$ disjoint clusters.

A standard approach is to solve the K-means optimization problem \cite{ding2010convex}, but this formulation is purely geometric: it assumes convex, spherical clusters and relies on local, pairwise information, remaining blind to the global connectivity structure of the data. This limits its applicability when clusters are elongated, curved, or interconnected, scenarios where topological information becomes crucial.

To overcome this limitation, we enrich the geometric perspective with a topological one. Specifically, we assume the dataset is endowed with an underlying latent topological structure, which we approximate by an undirected and unweighted graph $G = (V,E)$. In $G$, each node corresponds to a data point $\bm{x}_i$, and the edge set $E$ encodes adjacency or similarity relations. In order to find this underlying latent topological structure, our approach treats the graph itself as a variable: we aim to discover, among all possible graphs on the $n$ vertices---a set that we will denote with $\cG$---the specific configuration that best supports the clustering objective. We thus augment the K-means objective with a graph-cut term, obtaining the following combined regularized optimization problem:
\begin{equation}\label{eq:initial_opt_problem}
\min_{\bm{C},\bm{\mu},\bm{L}}  F(\bm{C},\bm{\mu},\bm{L}) =
\underbrace{\frac{1}{2}\tr\!\big(\bm{C}^\top \bm{L} \bm{C}\big)}_{\text{Graph-cut}}
+
\underbrace{\frac{\lambda_\mu}{2}\| \bm{X} - \bm{C}\bm{\mu} \|_F^2}_{\text{K-means}}
\quad\st\quad
\bm{C} \in \IR_+^{n \times k}, \bm{\mu} \in \IR_+^{k \times d}, \bm{L} \in \cL_{\cG},
\end{equation}
where $\lambda_\mu > 0$ balances the two contributions and 
\[
\cL_{\cG} = \left\{ \bm{L} = \bm{D} - \bm{A} 
~\middle|
\begin{array}{l}
\bm{A} \in \{0,1\}^{n \times n}, \bm{A} = \bm{A}^\top, \diag(\bm{A})=0 \ \text{adjacency matrix of} \ G \\
\bm{D} = \diag(\bm{A} \ones_n) \ \text{degree matrix of} \ G \ \text{and}\\
G = (V,E) \in \cG
\end{array}
\right\}
\]
denotes the set of Laplacians representing all such graphs.
The two terms play complementary roles, reflecting the two perspectives we aim to balance:
\begin{itemize}[leftmargin=*]\setlength{\itemsep}{0pt}
  \item \textbf{Graph-cut term:} This is the topological driver of the model. By coupling the cluster assignment matrix $\bm{C}$ with the graph Laplacian $\bm{L}$, it penalizes assignments that cut across the connectivity structure of $G$. It biases the solution toward clusters that respect the underlying topological organization.
  \item \textbf{K-means term:} This acts as a geometric regularizer. It ensures that clusters remain compact in the original feature space, preventing the graph-cut term from producing degenerate or overly fragmented partitions. It accounts for intra-cluster variance without explicitly using graph information, thereby balancing topological coherence with geometric fidelity.
\end{itemize}

In essence, the geometric perspective (data as points in space) enters in the clustering task through the K-means term, while the topological perspective (a latent graph on the data) enters through the graph-cut term. Their interplay, modulated by $\lambda_\mu$, defines a trade-off where the optimal clustering must simultaneously minimize intra-cluster variance in the ambient space and respect the connectivity structure encoded in the graph.

\section{Optimization Strategy} \label{sec:optimization_strategy}

The convexity properties of the objective function with respect to each variable motivate the optimization strategy adopted in this work. In particular:
\begin{itemize}[leftmargin=*]\setlength{\itemsep}{0pt}
  \item The graph-cut term is a quadratic form in $\bm{C}$. As the graph Laplacian $\bm{L}$ is positive semi-definite (PSD), this term is convex in $\bm{C}$.

  \item The K-means term is non-convex in the joint pair $(\bm{C},\bm{\mu})$ due to the bilinear structure of $\bm{C}\bm{\mu}$. However, it is block-convex in each variable separately.

  \item Regarding the constraints, the non-negativity sets $\bm{C} \geq 0$ and $\bm{\mu} \geq 0$ are convex. The set $\cL_{\cG}$ of graph Laplacians is non-convex.
\end{itemize}

Thus, the full optimization problem is \emph{jointly non-convex}, due to both the bilinear term $\bm{C}\bm{\mu}$ and the non-convex set $\cL_{\cG}$. The problem is block-wise convex, meaning that when all other variables are held fixed, the optimization over each remaining variable is convex, so it is natural to use BCD, in which the variables are updated sequentially:
\begin{itemize}
    \item \textbf{Update $\bm C$:} $\bm{C}_{k+1} = \underset{\bm{C} \geq 0}{\argmin} \ F(\bm{C}, \bm{\mu}_k, \bm{L}_k)$
    \item \textbf{Update $ \bm\mu$:} $\bm{\mu}_{k+1} = \underset{\bm{\mu} \geq 0}{\argmin} \ F(\bm{C}_{k+1}, \bm{\mu}, \bm{L}_k)$
    \item \textbf{Update $\bm L$:} $\bm{L}_{k+1} = \underset{\bm{L} \in \cL_{\cG}}{\argmin} \ F(\bm{C}_{k+1}, \bm{\mu}_{k+1}, \bm{L})$
\end{itemize}
In the following, we will write $F(\bm{C})$, $F(\bm{\mu})$, and $F(\bm{L})$ to denote the objective function restricted to each block of variables, with the remaining blocks held fixed at their current values.

\subsection{Update of $C$ and $\mu$} \label{subsec:update_C_mu}
For fixed $\bm{\mu}$ and $\bm{L}$, the optimization of $\bm{C}$ reduces to a regularized Non-Negative Least Squares (NNLS):
\[
  \min_{\bm{C} \geq 0} F(\bm{C})  =  \tfrac{1}{2}\tr(\bm{C}^\top \bm{L} \bm{C}) + \tfrac{\lambda_\mu}{2} \|\bm{X} - \bm{C}\bm{\mu}\|_F^2.
\]
This subproblem is solved by the projected gradient step $\bm{C}_{k+1} = [\bm{C}_k - \alpha \nabla_{\bm{C}} F(\bm{C})]_+$, where the projection $[\cdot]_+$ is applied elementwise and denotes projection onto the nonnegative orthant. The gradient $\nabla_{\bm{C}} F(\bm{C}) = \bm{L}\bm{C} + \lambda_\mu(\bm{C}\bm{\mu}\bm{\mu}^\top - \bm{X}\bm{\mu}^\top)$ has Lipschitz constant $\|\bm{L}\|_2 + \lambda_\mu\|\bm{\mu}\bm{\mu}^\top\|_2$. This step is iterated, forming an inner loop,  which is terminated when either the maximum number of inner iterations is reached, or the relative change of the objective $F$ between two consecutive inner iterations falls below a prescribed tolerance.

When $\bm{C}$ is fixed, the optimization over $\bm{\mu}$ reduces to the NNLS $\min_{\bm{\mu} \geq 0 F(\bm{\mu}) = \tfrac{1}{2}\|\bm{X} - \bm{C}\bm{\mu}\|_F^2}$. Again, we use the projected gradient iteration $\bm{\mu}_{k+1} = [\bm{\mu}_k - \alpha \nabla_{\mu} F(\bm{\mu})]_+$ with gradient $\nabla_{\mu} F(\bm{\mu}) = \bm{C}^\top \bm{C} \bm{\mu} - \bm{C}^\top \bm{X}$, and Lipschitz constant $\|\bm{C}^\top \bm{C}\|_2$. The same stopping rule and maximum number of inner iterations are used for this update.

\subsection{Update of $L$} \label{subsec:update_L}
The set $\cL_{\cG}$ is non-convex, which precludes the use of standard convex optimization methods. Furthermore, the space of all possible graphs on $n$ nodes has cardinality $2^{\binom{n}{2}}$, making any explicit search over $\cL_{\cG}$ computationally infeasible for moderately large $n$.

To address this, we replace the deterministic optimization over $\cL_{\cG}$ by a stochastic approximation aimed at recovering the latent graph topology. Rather than optimizing over all possible Laplacians, we restrict attention to a parametric family of random graphs generated by SBM.
The Laplacian update then proceeds in two stages: a deterministic optimization of the model parameters, followed by a stochastic realization of the graph according to the SBM.

\paragraph{Stochastic Block Model (SBM) \cite{SBM}}
Stochastic Block Models are a classical family of probabilistic models for random graphs, in which the probability of an edge between two nodes depends on the community membership of those nodes. We adopt a similarity-modulated variant: for $n$ nodes, the probability of an edge between nodes $i$ and $j$ is:
\[
  P(A_{ij}=1) = B_{ij}(a,b) =
  \begin{cases}
    a\,S_{ij} & \text{if } \bm{x}_i,\bm{x}_j \text{ are in the same cluster}\\
    b\,S_{ij} & \text{if } \bm{x}_i,\bm{x}_j \text{ are in different clusters}  
  \end{cases}
\]
where $a>0$, $b \ge 0$, and $\bm{S} \in [0,1]^{n \times n}$ is a similarity matrix. The adjacency matrix $\bm{A}$ is sampled entry-wise as
\[
    A_{ij} \sim \operatorname{Ber}\bigl(B_{ij}(a,b)\bigr), \quad i \neq j,
\]
with no self-loops, where $\operatorname{Ber}(p)$ denotes the Bernoulli distribution with success probability $p$. The associated random Laplacian is $\bm{L} = \bm{D} - \bm{A}$, where $\bm{D} = \Diag(\bm{A}\ones_n)$. Under this construction, the Laplacian is a random variable parameterized by $\theta = (a,b)$.

\paragraph{Distributionally Robust Optimization (DRO) \cite{DRO}}
In classical stochastic optimization, decision-making under uncertainty is formulated as the problem of minimizing the expected cost under a fixed distribution: $\min_{x \in \cX} \IE_P[f(x,\xi)]$, where $x$ is the decision variables, $\xi$ is a random variables, and $P$ is the probability distribution governing the uncertainty. In practice, $P$ is rarely known and must be estimated from data; inaccurate estimation may lead to solutions that perform poorly when the true distribution deviates from the estimate. DRO addresses this by replacing the single assumed distribution with an \emph{ambiguity set} $\cB$ of plausible distributions:
\[
  \min_{x \in \cX} \max_{P \in \cB} \IE_P[f(x,\xi)].
\]
This seeks decisions that minimize the worst-case expected cost, yielding solutions robust to distributional uncertainty.

In our setting, the exact distribution governing the random graph is unknown. We adopt a DRO perspective and, restricting to the SBM parametric family, we define the ambiguity set as
\begin{equation}\label{set:probability_set_according_to_DRO}
  \cB = \bigl\{ \bm{B}(a,b) = a\bm{S} + b(\ones_n\ones_n^\top - \bm{S}) \ |\ 
        a > 0, b \geq 0, \|\bm{B}(a,b)-\bm{S}\|_F \leq r,   \bm{B}(a,b)\ones_n = \ones_n \bigr\}
\end{equation}

that is, the collection of SBM edge-probability matrices within distance $r$ of the empirical similarity matrix $\bm{S}$, and optimize against the worst-case expectation over $\cB$.

Let $\bm{T} = \bm{C}\bm{C}^\top$. The Laplacian update is formulated as
\[
  \min_{\theta=(a,b)}\max_{\bm{B} \in \cB}
  \IE_{\theta}\!\left[ \langle \bm{L}(\theta), \bm{T} \rangle \right].
\]

Since the objective is linear in $\bm{B}(a,b)$ and the ambiguity set $\cB$ imposes proximity constraints on $\bm{B}(a,b)$, the worst-case over $\cB$ translates into explicit bounds on the scalar parameter $b$, derived in Appendix~\ref{sec:appendix_constraints}. By the linearity of expectation, the objective then reduces to
\begin{equation}\label{eq:DRO}
  \min_{b \in [0,b_{\max}]}
  \langle \IE_{b}[\bm{L}], \bm{T} \rangle 
\end{equation}
where the expected Laplacian admits a closed-form expression. Under the SBM, each entry of $\bm{A}$ is Bernoulli, so
\begin{equation}\label{def:B}
\begin{array}{rcl}
\bm{B}(a,b)=\bm{B}(b) \text{\footnotemark} = \mathbb{E}_{\theta}[\bm{A}] &=& a \bm{S} + b (\ones_n\ones_n^\top - \bm{S}),
      \quad \text{with } a = 1 + b(1-n),
      \\
      &=& (1 - nb) \bm{S} + nb \,\tfrac{1}{n}\ones_n\ones_n^\top
      \\
      &=& \mathrm{Conv}_b\!\left(\bm{S}, \tfrac{1}{n}\ones_n\ones_n^\top\right)
\end{array}
\end{equation}
\footnotetext{From the derivation of constraints in Appendix~\ref{sec:appendix_constraints}, row-stochasticity imposes $a = 1 + b(1-n)$. Hence, $a$ is fully determined by $b$, and all quantities that depend on $a$ can be expressed as functions of $b$ alone.}

where $\bm{S}$ encodes the community structure. This shows that $\bm{B}(b)$ lies on the line through $\bm{S}$ and $\tfrac{1}{n}\ones_n\ones_n^\top$, and in their convex hull, denoted $\mathrm{Conv}_b(\cdot,\cdot)$, whenever $b \leq 1/n$. Over the whole admissible range of $b$, the entries $B_{ij} = aS_{ij} + b(1-S_{ij})$ remain in $[0,1]$, being convex combinations of the scalars $a, b \in [0,1]$.

The expected degree matrix is $\IE_{\theta}[\bm{D}] = \diag(\bm{B} \ones_n)$. Combining these, the expected Laplacian is
\begin{equation}\label{eq:expected_laplacian}
\IE_{\theta}[\bm{L}]
~=~
\Diag\bigl(\bm{B}(b)\ones_n\bigr) - \bm{B}(b)
~\overset{\eqref{def:B}}{=}~
  \Diag\!\bigl(\bigl[(1 - nb) \bm{S} + b\ones_n\ones_n^\top]\ones_n\bigr)
  - (1 - nb) \bm{S} - b \ones_n\ones_n^\top.
\end{equation}
Assuming a row-stochastic similarity matrix, i.e., $\bm{S}\ones_n = \ones_n$,
one computes $\bm{B}(b)\ones_n = \ones_n$, so the expected degree matrix is constant: $\Diag(\bm{B}(b)\ones_n) = I$. The objective function in \eqref{eq:DRO} becomes
\begin{equation}\label{def:Fb}
  \Phi(b) = \langle I, \bm{T} \rangle - (1-nb)\langle \bm{S}, \bm{T} \rangle - b\langle \mathbf{1}_n\mathbf{1}_n^\top, \bm{T} \rangle = \alpha - \gamma b,
\end{equation}
where $\alpha := \langle I, \bm{T} \rangle - \langle \bm{S}, \bm{T} \rangle$ and $\gamma := \langle \mathbf{1}_n\mathbf{1}_n^\top, \bm{T} \rangle - n\langle \bm{S}, \bm{T} \rangle$ are constants independent of $b$. This is strictly linear in $b$. 

The admissible values of the parameter $b$ (derived in Appendix~\ref{sec:appendix_constraints}) satisfy
\begin{equation}\label{eq:b_bounds}
0 \le b \le b_{\max} \coloneqq \min\!\left\{
    \tfrac{1}{n-1}, \tfrac{r}{n\sqrt{S_2-1}}
  \right\},
\end{equation}
where $S_2 \coloneqq \sum_{i,j} S_{ij}^2$, thus the final objective is 
\[
   \min_{b \in [0, b_{\max}]} \Phi(b).
\]

\paragraph{Proximal Point Method \cite{proximal_point_method} on $\bm{b}$}

Since $\Phi(b)$ in \eqref{def:Fb} is linear, its unconstrained minimization always attains a boundary of the feasible interval, leading to a degenerate solution. To promote stable interior updates, we use a proximal point approach:
\begin{equation}\label{eq:prox_update}
  b_{k+1} = \argmin_{b \in [0,b_{\max}]} \left\{
    \Phi(b) + \tfrac{1}{2\tau}(b - b_k)^2
  \right\},
\end{equation}
where $\tau > 0$ is a stepsize. Setting the derivative of the regularized objective to zero gives the unconstrained update $b_{k+1}^{\mathrm{unproj}} = b_k + \tau\gamma$, and projecting onto the feasible interval yields
\[
b_{k+1} = \min\bigl(\max(b_k + \tau\gamma, 0), b_{\max}\bigr).
\]
This update has a clear interpretation: when $\gamma > 0$, the objective decreases as $b$ increases, so the iterate is pushed toward $b_{\max}$; when $\gamma < 0$ it moves toward $0$. The proximal term ensures controlled updates and keeps $b_{k+1}$ within the admissible interval.

\section{Convergence Analysis}\label{sec:convergence}
The BCD scheme is interpreted as a discrete dynamical system on $(\bm{C},\bm{\mu},b)$. To analyze its stability and convergence, we introduce a \textbf{global Lyapunov functional} $\mathcal{J}(\bm{C},\bm{\mu},b)$, whose monotonic descent along the BCD iterations drives convergence \cite{Lyapunov}:
\begin{equation}\label{eq:lyapunov_functional}
\mathcal{J}(\bm{C},\bm{\mu},b)
=
\underbrace{\tr(\bm{C}^\top \bm{L}(b)\bm{C})}_{\text{Graph Dirichlet Energy}}
+
\underbrace{\lambda_{\bm{\mu}}\|\bm{X}-\bm{C}\bm{\mu}\|_F^2}_{\text{Data Fidelity}},
\end{equation}
which decomposes into two terms with distinct roles.

\begin{itemize}[leftmargin=*]\setlength{\itemsep}{4pt}

\item \textbf{Graph Dirichlet Energy.} In the continuous setting, the Dirichlet energy $E[u] = \int_\Omega \|\nabla u\|^2\,dx$ measures the total variation of $u:\Omega\to\mathbb{R}$: small values indicate a smooth, slowly varying function, while large values signal rapid oscillations. Its discrete analogue on a weighted graph with Laplacian $\bm{L} = \bm{D}-\bm{A}$ is $\tr(\bm{C}^\top \bm{L} \bm{C})$. In our stochastic setting, the graph topology is uncertain: replacing $\bm{L}$ with the expected Laplacian $\bm{L}(b) = \mathbb{E}_b[\bm{L}(B)]$ defined in~\eqref{eq:expected_laplacian} yields a smoothness term robust to edge-level noise, encouraging cluster assignments stable across likely graph realizations.

\item \textbf{Data Fidelity.} This term ensures that the cluster centers $\bm{C}$ remain close to the observed data $\bm{X}$, with $\lambda_\mu$ controlling the balance between smoothness and data fit.

\end{itemize}

\paragraph{Lower Boundedness} The functional $\cJ$ is bounded from below.
\begin{itemize}[leftmargin=*]\setlength{\itemsep}{0pt}
  \item The data-fidelity term is non-negative by definition.
    \item If $b \ge 0$, then $\bm{B}(b)$ has non-negative entries, and it is symmetric since $\bm{S}$ is. The Laplacian quadratic form satisfies
    \[\textstyle \tr(\bm{C}^\top \bm{L}(b)\bm{C}) = \tfrac12 \sum_{k} \sum_{i,j} B_{ij}(b)(C_{ik}-C_{jk})^2\ge 0.\]
\end{itemize}
Thus $\cJ(\bm{C},\bm{\mu},b) \geq 0$, and the sequence generated by the algorithm cannot diverge to $-\infty$.

\paragraph{Monotonic Descent under BCD}
Since each block update is performed via projected gradient descent with a step size equal to the inverse Lipschitz constant (\cref{subsec:update_C_mu}), the standard descent lemma for smooth functions gives:
\[
\begin{array}{rll}
\text{Updating } \bm{C} \text{ with } (\bm{\mu},b) \text{ fixed:}
  &&\cJ(\bm{C}_{k+1},\bm{\mu}_k,b_k) + \tfrac{L_C}{2}\|\bm{C}_{k+1}-\bm{C}_k\|_F^2 \leq \cJ(\bm{C}_k,\bm{\mu}_k,b_k),
  \\[4pt]
\text{Updating } \bm{\mu} \text{ with } (\bm{C},b) \text{ fixed:}
  &&\cJ(\bm{C}_{k+1},\bm{\mu}_{k+1},b_k) + \tfrac{L_\mu}{2}\|\bm{\mu}_{k+1}-\bm{\mu}_k\|_F^2
    \leq \cJ(\bm{C}_{k+1},\bm{\mu}_k,b_k).
\end{array}
\]

\paragraph{Expected Descent in $\mathbf{b}$}
Updating $b$ while keeping $(\bm{C},\bm{\mu})$ fixed requires special care due to the stochastic nature of the graph update; the monotonic decrease $\cJ(\bm{C}_{k+1},\bm{\mu}_{k+1},b_{k+1}) \leq \cJ(\bm{C}_{k+1},\bm{\mu}_{k+1},b_k)$ does not hold pathwise, we instead analyze the expected energy.\footnote{The update does not automatically guarantee the monotonic decrease
$\cJ(\bm{C}_{k+1},\bm{\mu}_{k+1},b_{k+1})
\le
\cJ(\bm{C}_{k+1},\bm{\mu}_{k+1},b_k),
$ since the Laplacian depends on a random realization of the graph.
The expected variant is $\IE\!\left[ \cJ(\bm{C}_{k+1},\bm{\mu}_{k+1},b_{k+1})
-
\cJ(\bm{C}_{k+1},\bm{\mu}_{k+1},b_k)
\right]$.
} The key observation is that, in expectation, $\cJ$ depends on $b$ only through the function $\Phi$ in \eqref{def:Fb}, which is exactly the function minimized by the proximal update \eqref{eq:prox_update}.

\begin{theorem}[Expected descent w.r.t.\ the graph parameter]\label{thm:stochastic_convergence}
Fix a row-stochastic $\bm{S} \in \IRnn$, $\bm{C}_{k+1}$ and $\bm{\mu}_{k+1}$, define the Lyapunov energy $\cJ$ as in \eqref{eq:lyapunov_functional}, and assume that the graph Laplacian $\bm{L}(b)$ is generated according to an SBM. Let $b_{k+1}$ be given by the proximal update \eqref{eq:prox_update} with $\bm{T} = \bm{C}_{k+1}\bm{C}_{k+1}^\top$. Then
\[
  \IE\bigl[\cJ(\bm{C}_{k+1},\bm{\mu}_{k+1},b_{k+1})\bigr]
  {}+ \tfrac{1}{2\tau}(\Delta b_k)^2
  \leq
  \IE\bigl[\cJ(\bm{C}_{k+1},\bm{\mu}_{k+1},b_k)\bigr],
\]
where $\Delta b_k \coloneqq b_{k+1}-b_k$.
\end{theorem}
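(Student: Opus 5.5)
The plan is to reduce the statement to the deterministic optimality of the proximal step \eqref{eq:prox_update}. The first observation is that, with $\bm{C}_{k+1}$ and $\bm{\mu}_{k+1}$ fixed, the only randomness in $\cJ$ comes from the SBM realization of the Laplacian. The iterate $b_{k+1}$ is itself a deterministic function of $(\bm{C}_{k+1},b_k)$ through $\gamma$. So all expectations are taken over the graph sampling, conditionally on the current iterates, and I will treat $b_k$ and $b_{k+1}$ as fixed numbers.

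\textbf{Step 1 (reduce to $\Phi$).} Rewrite the Dirichlet term as $\tr(\bm{C}^\top\bm{L}\bm{C}) = \langle \bm{L}, \bm{C}\bm{C}^\top\rangle = \langle \bm{L},\bm{T}\rangle$. This is linear in $\bm{L}$, so by linearity of expectation $\IE_b[\tr(\bm{C}_{k+1}^\top \bm{L}\bm{C}_{k+1})] = \langle \IE_b[\bm{L}],\bm{T}\rangle$. Next substitute the closed form \eqref{eq:expected_laplacian}. Row-stochasticity of $\bm{S}$, together with $a=1+b(1-n)$, gives $\Diag(\bm{B}(b)\ones_n)=I$, and this yields exactly $\Phi(b)=\alpha-\gamma b$ from \eqref{def:Fb}. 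The data-fidelity term $\lambda_\mu\|\bm{X}-\bm{C}_{k+1}\bm{\mu}_{k+1}\|_F^2$ does not depend on $b$ or on the graph. Hence $\IE[\cJ(\bm{C}_{k+1},\bm{\mu}_{k+1},b)] = \Phi(b) + K$ for a constant $K$ independent of $b$. (If $\cJ$ is read with $\bm{L}(b)$ already denoting the expected Laplacian, this identity holds without the expectation, and the argument is unchanged.)

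\textbf{Step 2 (proximal descent).} Let $\Psi(b)=\Phi(b)+\tfrac{1}{2\tau}(b-b_k)^2$. The iterate $b_k$ lies in $[0,b_{\max}]$, either because it was produced by the same projection or by a feasible initialization, so it is an admissible competitor in \eqref{eq:prox_update}. Since $b_{k+1}$ minimizes $\Psi$ over $[0,b_{\max}]$,
\[
\Phi(b_{k+1})+\tfrac{1}{2\tau}(\Delta b_k)^2 = \Psi(b_{k+1}) \le \Psi(b_k) = \Phi(b_k).
\]
One can get the sharper constant $1/\tau$ from the $1/\tau$-strong convexity of $\Psi$ and the variational inequality at $b_{k+1}$, but the stated $\tfrac{1}{2\tau}$ follows already from this comparison. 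Adding $K$ to both sides and using Step 1 gives the claimed inequality.

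\textbf{Main obstacle.} The delicate point is Step 1: justifying that the expected energy depends on $b$ only through $\Phi$. This requires that the SBM expectation $\IE_b[\bm{A}]=\bm{B}(b)$ is taken with $a$ tied to $b$ by the row-stochastic constraint derived in Appendix~\ref{sec:appendix_constraints}. It also requires that the expectation is conditional on $(\bm{C}_{k+1},\bm{\mu}_{k+1})$. If these were random across iterations, I would state the inequality conditionally on the past and then take the total expectation via the tower property. The proximal comparison in Step 2 is routine.
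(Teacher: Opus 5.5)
Your proposal is correct and follows essentially the same route as the paper. You use linearity of expectation and the row-stochastic closed form to reduce $\IE[\cJ(\bm{C}_{k+1},\bm{\mu}_{k+1},b)]$ to $\Phi(b)$ plus a $b$-independent constant, then compare the proximal objective at the minimizer $b_{k+1}$ with the feasible competitor $b_k$. Your remark that strong convexity yields the sharper constant $1/\tau$ is also correct, though not needed for the stated bound.
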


\begin{proof}
The only random term in $\cJ$ is the Laplacian contribution. By linearity of trace and expectation:
\[
  \IE\bigl[\tr(\bm{C}^\top \bm{L}(b) \bm{C})\bigr] = \tr\bigl(\bm{C}^\top \IE[\bm{L}(b)] \bm{C}\bigr)
  {}= \langle \IE[\bm{L}(b)], \bm{C}\bm{C}^\top \rangle.
\]
For $\bm{C} = \bm{C}_{k+1}$, i.e., $\bm{T} = \bm{C}_{k+1}\bm{C}_{k+1}^\top$, this is exactly $\Phi(b)$ in \eqref{def:Fb}. The data-fidelity term does not depend on $b$, hence
\[
  \IE\bigl[\cJ(\bm{C}_{k+1},\bm{\mu}_{k+1},b)\bigr] = \Phi(b) + \mathrm{const}.
\]
Since $b_{k+1}$ minimizes $\Phi(b) + \tfrac{1}{2\tau}(b-b_k)^2$ over $[0,b_{\max}]$ and $b_k \in [0,b_{\max}]$ is feasible, comparing the objective at $b_{k+1}$ and at $b_k$ (where the proximal term vanishes) gives
\[
  \Phi(b_{k+1}) + \tfrac{1}{2\tau}(\Delta b_k)^2 \leq \Phi(b_k),
\]
which is the claim.
\end{proof}
The theorem shows that, in expectation, the stochastic graph update does not increase the Lyapunov functional, and that the decrease is at least quadratic in the step size.

\paragraph{Total Expected Descent}
Combining the deterministic descent inequalities for $\bm{C}$ and $\bm{\mu}$ with Theorem~\ref{thm:stochastic_convergence}, we obtain the final combined inequality\footnote{By Theorem~\ref{thm:stochastic_convergence},
$\IE\big[\cJ(\bm{C}_{k+1},\bm{\mu}_{k+1},b_{k+1})\big]
+ \tfrac{1}{2\tau}(\Delta b_k)^2
\leq
\IE\big[\cJ(\bm{C}_{k+1},\bm{\mu}_{k+1},b_k)\big]$ for all $k$.
Then the standard gradient descent lemma for smooth functions with Lipschitz constants $L_C$ and $L_\mu$ gives
\[
\cJ(\bm{C}_{k+1}, \bm{\mu}_{k+1}, b_k) 
+ \tfrac{L_\mu}{2} \|\bm{\mu}_{k+1}-\bm{\mu}_k\|_F^2
+ \tfrac{L_C}{2} \|\bm{C}_{k+1}-\bm{C}_k\|_F^2
\le \cJ(\bm{C}_k, \bm{\mu}_k, b_k).
\]
So, adding the two inequalities, we get the final combined inequality \eqref{eq:combined_descent}.
}:
\begin{equation}\label{eq:combined_descent}
  \IE\bigl[\cJ_{k+1}\bigr]
  + \tfrac{1}{2\tau}(\Delta b_k)^2
  + \tfrac{L_\mu}{2}\|\bm{\mu}_{k+1}-\bm{\mu}_k\|_F^2
  + \tfrac{L_C}{2}\|\bm{C}_{k+1}-\bm{C}_k\|_F^2
  ~\leq~
  \IE\bigl[\cJ_k\bigr],
\end{equation}
where $\cJ_k \coloneqq \cJ(\bm{C}_k,\bm{\mu}_k,b_k)$. Hence $(\IE[\cJ_k])_k$ is nonincreasing and, being bounded below by $0$, convergent. Moreover, summing \eqref{eq:combined_descent} over $k$ gives
\[
  \sum_{k\geq 0}\Bigl(\tfrac{1}{2\tau}(\Delta b_k)^2 + \tfrac{L_\mu}{2}\|\bm{\mu}_{k+1}-\bm{\mu}_k\|_F^2 + \tfrac{L_C}{2}\|\bm{C}_{k+1}-\bm{C}_k\|_F^2\Bigr) \leq \IE[\cJ_0] < \infty,
\]
so that the successive differences of all blocks vanish as $k \to \infty$.

\section{Persistence-based Similarity Matrix}
\label{sec:persistence_based_similarity_matrix}
The framework developed above operates on a generic similarity matrix $\bm{S}$, which simultaneously parametrizes the SBM edge-probability matrix and defines the center of the DRO ambiguity set. The quality of this prior is therefore consequential: it determines both the graph distribution from which the latent graph is drawn and the region of distributional uncertainty over which the optimization is robust. We propose to instantiate $\bm{S}$ from the zero-dimensional persistent homology of the data, grounding both roles in the multiscale connectivity structure of the data. 

Let $\{K_\varepsilon\}_{\varepsilon \geq 0}$ be a finite increasing filtration of simplicial complexes on $\{\bm{x}_1, \ldots, \bm{x}_n\}$. The inclusions $K_\varepsilon \hookrightarrow K_{\varepsilon'}$, for $\varepsilon \leq \varepsilon'$, induce linear maps $(i_{\varepsilon,\varepsilon'})_{0*}: H_0(K_\varepsilon) \to H_0(K_{\varepsilon'})$ between the zeroth simplicial homology groups, yielding the zero-dimensional persistent homology as the persistence module $\{H_0(K_\varepsilon), (i_{\varepsilon,\varepsilon'})_{0*}\}_{\varepsilon \leq \varepsilon'}$. The complete record of the birth and death of connected components across the filtration is encoded in the $H_0$ barcode. For each pair $i,j$, define
\[
\varepsilon_{ij} := \inf\{\varepsilon \geq 0 : [\bm{x}_i]_\varepsilon = 
    [\bm{x}_j]_\varepsilon \text{ in } H_0(K_\varepsilon)\}
\]
as the death time of the $H_0$ barcode interval corresponding to the merger of the connected components containing $\bm{x}_i$ and $\bm{x}_j$. By the Structure Theorem 
\cite{zomorodian2005}, the $H_0$ barcode is the unique decomposition of the 
persistence module into interval modules, and $\varepsilon_{ij}$ is read off 
unambiguously as the right endpoint of the interval corresponding to the merger of $\bm{x}_i$ and $\bm{x}_j$; hence $\varepsilon_{ij}$ is a well-defined invariant of the 
persistence module. The persistence-based similarity matrix $\bm{S} \in [0,1]^{n \times n}$ is then defined as
\[
    S_{ij} := 1 - \frac{\varepsilon_{ij}}{\varepsilon_{\max}}, \quad 
    \varepsilon_{\max} := \max_{i,j} \varepsilon_{ij}\footnote{The filtration is computed up to a finite threshold, so that every pair of points is eventually merged into the same connected component; this guarantees $\varepsilon_{ij} < \infty$ for all $i,j$, and hence that $\varepsilon_{\max}$ is a well-defined finite quantity.}.
\]

\begin{observation}
$\bm{S}$ is a well-defined similarity matrix: $S_{ii} = 1$ since $\varepsilon_{ii} = 0$, as each point belongs to its own connected component at $\varepsilon = 0$; symmetry follows from $\varepsilon_{ij} = \varepsilon_{ji}$, since component merger is a symmetric event; and $S_{ij} \in [0,1]$ by definition of $\varepsilon_{\max}$.
\end{observation}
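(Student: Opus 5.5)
The plan is to verify the three properties in the Observation one at a time, working directly from the definition of $\varepsilon_{ij}$ as an infimum over filtration values and from $S_{ij} = 1 - \varepsilon_{ij}/\varepsilon_{\max}$. Before that I will record one preliminary fact: $\varepsilon_{\max}$ is finite and strictly positive, so that the quotient makes sense. Finiteness comes from the footnote: the filtration is run up to a threshold at which everything has merged, so each $\varepsilon_{ij}<\infty$, and a maximum over finitely many finite numbers is finite. Positivity needs the mild assumption that the data contain at least two distinct points, say $\bm{x}_i\neq\bm{x}_j$. These are distinct vertices of $K_0$, which (in a Vietoris--Rips or \v{C}ech filtration) contains no edges at scale $0$, so they lie in different components there and $\varepsilon_{ij}>0$. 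I will state this nondegeneracy assumption explicitly, since the degenerate case gives $0/0$.

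\textbf{Unit diagonal.} For $i=j$, the condition $[\bm{x}_i]_\varepsilon=[\bm{x}_i]_\varepsilon$ holds trivially for every $\varepsilon\ge 0$, in particular at $\varepsilon=0$. So the set in the infimum is all of $[0,\infty)$, its infimum is $0$, and therefore $S_{ii}=1-0=1$.

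\textbf{Symmetry.} Equality of homology classes $[\bm{x}_i]_\varepsilon=[\bm{x}_j]_\varepsilon$ in $H_0(K_\varepsilon)$ is an equivalence relation, and in particular it is symmetric. Hence the sets defining $\varepsilon_{ij}$ and $\varepsilon_{ji}$ coincide, so their infima agree and $S_{ij}=S_{ji}$.

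\textbf{Range $[0,1]$.} Each $\varepsilon_{ij}$ is an infimum of a nonempty subset of $[0,\infty)$ (nonempty by the footnote), so $0\le\varepsilon_{ij}\le\varepsilon_{\max}$. Dividing by $\varepsilon_{\max}>0$ gives $\varepsilon_{ij}/\varepsilon_{\max}\in[0,1]$, and hence $S_{ij}\in[0,1]$.

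The only real obstacle is the well-posedness of the normalization, namely $0<\varepsilon_{\max}<\infty$, and I will handle it as in the preliminary step above. It is also worth remarking that $\varepsilon_{ij}$ is defined directly by the infimum, so none of these properties requires the Structure Theorem. That theorem is relevant only to the separate claim that $\varepsilon_{ij}$ is a barcode invariant.
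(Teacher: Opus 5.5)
Your proposal is correct and follows the paper's own three-part justification: trivial self-connection at $\varepsilon=0$, symmetry of the merger relation, and $0\le\varepsilon_{ij}\le\varepsilon_{\max}$. Your check that $0<\varepsilon_{\max}<\infty$ is a worthwhile addition, since the paper's footnote secures only finiteness and the normalization needs at least two distinct points. In that positivity step, being in different components at $\varepsilon=0$ only removes $0$ from the set in the infimum. What actually gives $\varepsilon_{ij}>0$ is that in the finite Vietoris--Rips filtration no edge joins distinct points below the smallest positive pairwise distance $\delta$, so $\varepsilon_{ij}\ge\delta$ (or $\delta/2$, depending on the radius convention).
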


This construction is the natural choice for the role $\bm{S}$ plays in the framework for two reasons. First, the similarity encoded in $\bm{S}$ is topological: $S_{ij}$ is large precisely when $\bm{x}_i$ and $\bm{x}_j$ merge into the same connected component at an early filtration scale, reflecting the global multiscale connectivity structure of the data rather than proximity at any fixed scale. This aligns directly with the graph-cut regularization, which promotes cluster assignments that respect this connectivity. 
Second, the Stability Theorem for persistent homology \cite{cohen-steiner2007} guarantees that the $H_0$ barcode varies continuously with the input data in the bottleneck distance, which implies that $\varepsilon_{ij}$, and hence $\bm{S}$, inherit 
this stability: small perturbations of the data produce small perturbations in $\bm{S}$.
By feeding~$\bm{S}$ into both the SBM parametrization and the DRO ambiguity set, the graph-learning process is anchored to the multiscale connectivity structure of the data, and this structure is propagated, through the graph-cut term, directly into the clustering objective.

\section{Numerical Experiments} \label{sec:implementation}
To validate T-ARC, we conducted experiments on both synthetic and real datasets. Five synthetic datasets were designed to evaluate the method under controlled conditions, each highlighting specific structural properties. Additionally, we assessed performance on random subsets of Fashion-MNIST \cite{fashion_MNIST_dataset}, providing a more realistic benchmark in a high-dimensional setting.

\begin{algorithm}[h]
\caption{T-ARC}
\label{alg:proposed_model}
\KwInput{Data matrix $\bm{X} \in \IR^{n\times d}$, number of clusters $k$, parameter $\lambda_\mu$, ambiguity radius $r$}
Initialize $\bm{C} \geq 0$, $\bm{\mu} \geq 0$.
Initialize $b$ in \eqref{eq:b_bounds}.
Construct persistence-based similarity matrix $\bm{S}$ as in Section \ref{sec:persistence_based_similarity_matrix}.\\
Apply symmetric Sinkhorn--Knopp normalization to $\bm{S}$, so that it is simultaneously row-stochastic and symmetric.\\

\For{$t = 0,1,2,\dots$}{
  \textbf{Batch update $\mathbf{\bm{C}}$:}
  Run projected gradient steps on $\bm{C}$ minimizing $F(\bm{C},\bm{\mu}^{(t)},\bm{L}^{(t)})$, until a stopping criterion is met.\\
  \textbf{Batch update $\mathbf{\bm{\mu}}$:}
  Run projected gradient steps on $\bm{\mu}$ minimizing $F(\bm{C}^{(t+1)},\bm{\mu},\bm{L}^{(t)})$, until a
  stopping criterion is met.\\
  \textbf{Proximal update on $\mathbf{b}$:}
  Compute $b^{(t+1)}$ as \eqref{eq:prox_update}, set $a^{(t+1)} = 1 + b^{(t+1)}(1-n)$.\\
  \textbf{Monte Carlo Laplacian update:}
  Construct cluster-aware probabilities $\bm{B}$ from $\bm{C}^{(t+1)}$; 
  sample $m_{\mathrm{MC}}$
  adjacency matrices $\bm{A}^{(m)} \sim \mathrm{Ber}(\bm{B})$, 
  compute mean $\overline{\bm{A}}$, threshold to binary, 
  set $\bm{L}^{(t+1)} = \diag(\overline{\bm{A}}\ones) - \overline{\bm{A}}$.\\
}
\Return{Cluster assignment matrix $\bm{C}^{(t+1)}$, centroids $\bm{\mu}^{(t+1)}$}
\end{algorithm}

The core optimization algorithm is described in Algorithm~\ref{alg:proposed_model}. Specifically it firstly applies symmetric Sinkhorn--Knopp normalization \cite{sinkhorn1967concerning} to the input similarity matrix $\bm{S}$. This normalization iteratively replaces $\bm{S}$ with $\bm{D}^{-1/2}\bm{S}\bm{D}^{-1/2}$, where $\bm{D}=\diag(\bm{S}\ones_n)$, until convergence, guaranteeing that $\bm{S}$ is simultaneously row-stochastic and symmetric before entering the BCD loop, ensuring the theoretical assumption. This normalization step is independent of the choice of $\bm{S}$, allowing the framework to be instantiated with any chosen similarity matrix. 

As baseline methods, we consider classical clustering approaches spanning different modeling assumptions. In particular, we include the standard K-means algorithm, which provides a natural reference point since T-ARC can be interpreted as a topology-aware extension of K-means. We further consider spherical K-means, which is better suited for data where angular similarity is more informative than Euclidean distance, and spectral clustering, which leverages graph-based representations and is therefore closely related to our similarity-driven framework. Moreover, comparing T-ARC with its variant employing a similarity matrix based on the Euclidean norm allows for a direct assessment of the impact of the persistence-based construction. This comparison isolates the contribution of topological information, highlighting whether incorporating persistence-based connectivity leads to improved clustering performance.
The experimental design highlights two complementary aspects. First, on simple synthetic datasets, we verify that the proposed method behaves consistently with classical approaches when the cluster structure is well separated. Second, on more challenging settings --- synthetic datasets characterized by nonlinear geometries or latent topological structures --- we demonstrate that the proposed framework captures relationships that cannot be properly modeled by purely geometric clustering methods such as K-means. In addition, on random subsets of Fashion-MNIST, we assess its behavior on real high-dimensional data.
The proposed algorithm was implemented in \texttt{MATLAB}. Synthetic dataset generation and computation of the persistence-based similarity matrices were carried out in \texttt{Python} using the \texttt{GUDHI} library \cite{gudhi}. Baseline comparisons include classical K-means, Spherical K-means, and Spectral clustering and are performed in \texttt{MATLAB}.

\paragraph{Initialization of variables} 
The assignment matrix $\bm{C}_0$ is generated as a random matrix of size $n \times k$, normalized row-wise to ensure each row sums to one (with a regularization term of $10^{-14}$ to avoid division by zero). Initial centroids are $\bm{\mu}_0 = (\bm{C}_0^\top \bm{C}_0)^{-1} \bm{C}_0^\top \bm{X}$, rectified to enforce non-negativity. The parameter $b_0$ is sampled uniformly in $[0, b_{\max}]$ (from \eqref{eq:b_bounds}), and $a_0 = \max(10^{-3},\, 1 + b_0(1-n))$. The initial Laplacian $\bm{L}_0$ is constructed via Monte Carlo from the initialized clusters.

\paragraph{General Settings}
The behavior of T-ARC is controlled by several key parameters. The regularization parameter is $\lambda_\mu = 0.01$, emphasizing the topological structure over the K-means component. The Frobenius radius was chosen as $r = 0.01$, which defines the feasible region for the optimization of the probability variable. Monte Carlo sampling uses $m_{\mathrm{MC}} = 30$ samples. For the stopping condition, the global stopping threshold is $\epsilon = 10^{-6}$; inner batches use $\epsilon = 10^{-3}$. The proximal step-size is $\tau = 10^{-2}$. The random seed is fixed to $55$ for reproducibility. For the persistence-based similarity matrix, the Vietoris-Rips filtration \cite{TDA} is used. The resulting similarity matrices for all five experiments are shown in Fig.~\ref{fig:similarity_examples}.
\begin{figure}[htb]
    \centering
    \includegraphics[width=1\linewidth]{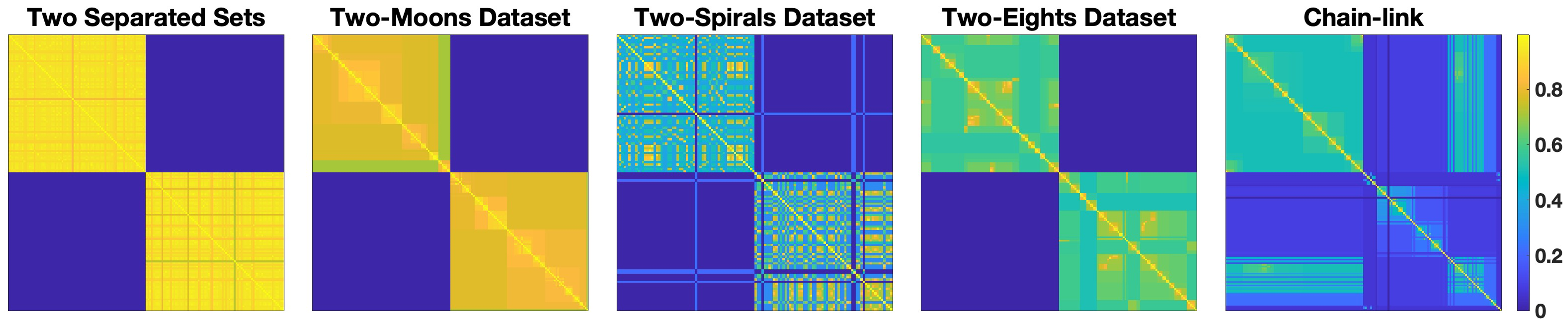}
    \caption{Persistence-based similarity matrices for the five experimental settings.}
    \label{fig:similarity_examples}
\end{figure}

\paragraph{Evaluation Metrics}
We report external and internal clustering validity indices. As external metrics we consider \textbf{Accuracy} and \textbf{F1-score} measure label agreement against ground-truth. For internal metric, we use the \textbf{Silhouette coefficient} \cite{rousseeuw1987}, defined as the average over all points of
\[
  s(i) = \frac{b(i) - a(i)}{\max\{a(i),b(i)\}},
\]
where $a(i)$ is the mean intra-cluster Euclidean distance and $b(i)$ is the minimum mean Euclidean distance to any other cluster.
Values lie in $[-1,1]$, with higher values indicating better clustering. The combination of internal and external metrics provides a comprehensive assessment of clustering performance, capturing both label agreement and intrinsic cluster structure.

\paragraph{Example 1: Two Sets Dataset \label{ex:ex1}}
We generated a two-dimensional point cloud of $160$ points divided equally into two clusters, centered at $(0.3,0.3)$ and $(0.7,0.7)$ within the unit square. The distance between the centers is fixed ($d\approx0.57$), while the isotropic Gaussian noise is set to $\sigma=0.05$, $0.12$ and $0.20$, so that the two sets are well separated, closer and overlapping, respectively, as shown in Fig.~\ref{fig:ex1_noise}.

\begin{figure}[h!]
    \centering
    \includegraphics[width=\linewidth]{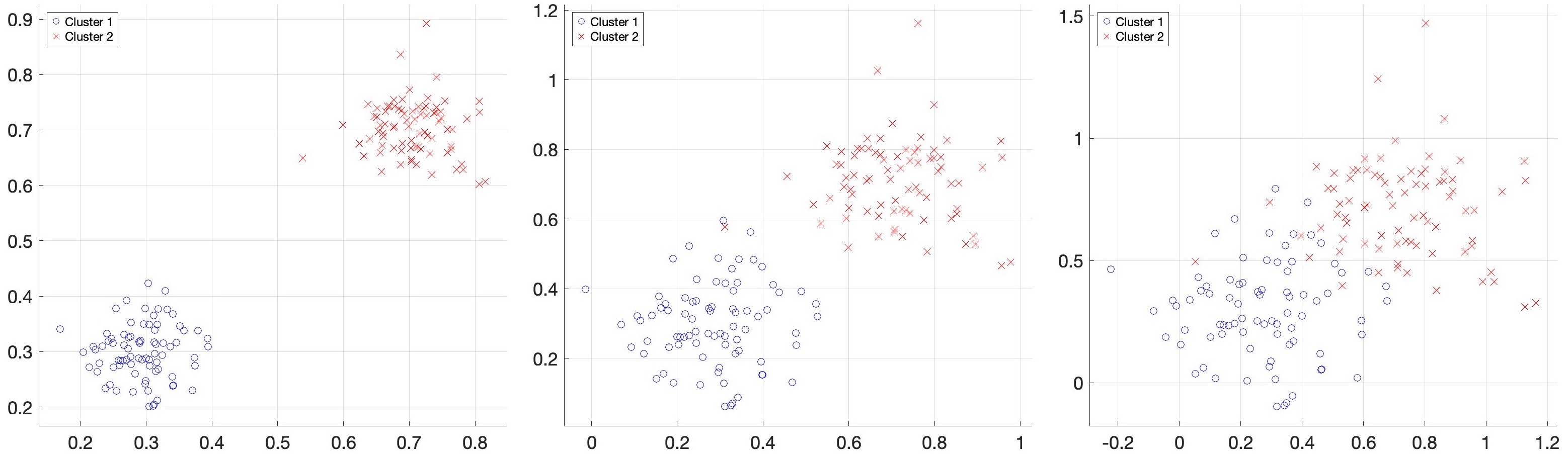}
    \caption{Two sets dataset obtained increasing isotropic Gaussian noise $\sigma$, creating well separated sets (left), closer sets (center) and overlapping sets (right)}
    \label{fig:ex1_noise}
\end{figure}

\begin{figure}[h!]
        \centering
        \includegraphics[width=\linewidth]{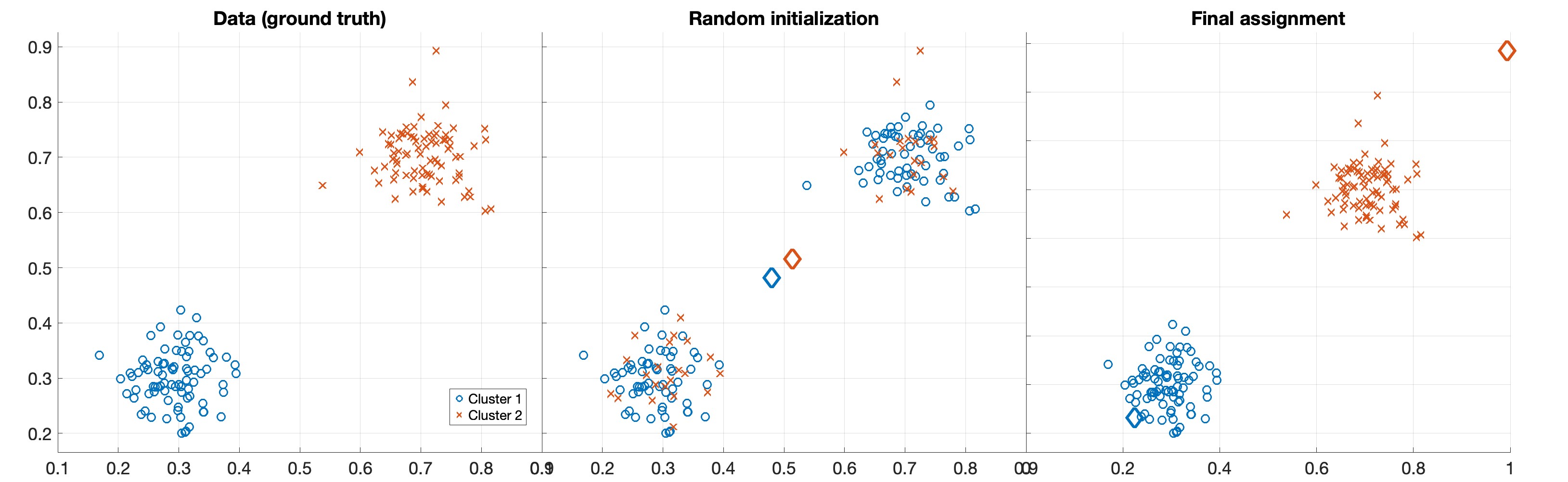}
  \caption{Three-panel illustration of the two separated sets dataset obtained using $\sigma=0.05$.
Left: the original data points, colored by ground-truth labels.
Center: random initialization with $k=2$ clusters.
Right: final assignment obtained by T-ARC using the persistence-based similarity matrix.}
  \label{fig:example1}
\end{figure}

\begin{figure}[h!]
  \centering
  \begin{subfigure}{0.49\linewidth}
    \centering
    \includegraphics[width=\linewidth]{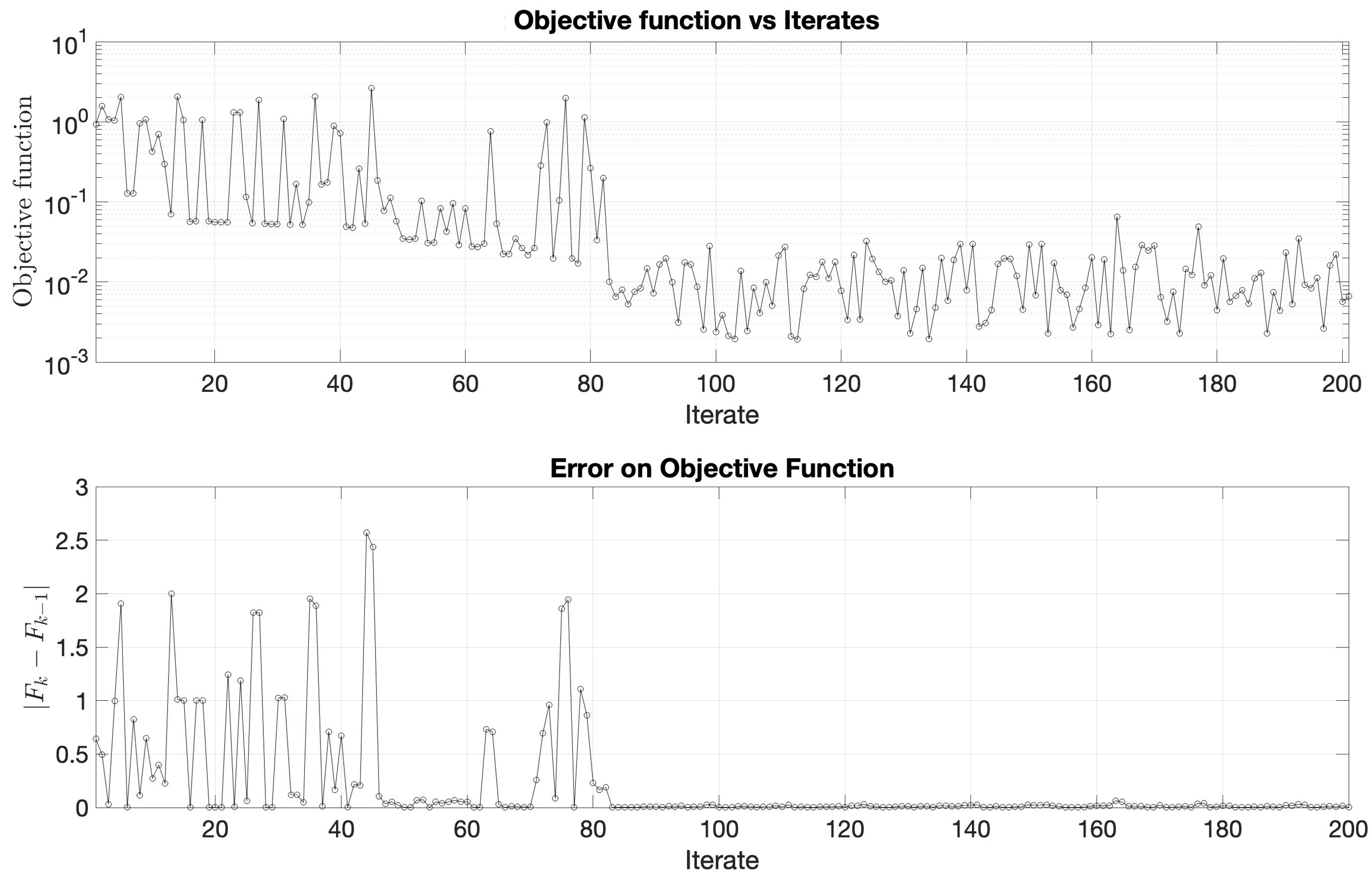}
    \caption{}
    \label{fig:ex1_objective}
  \end{subfigure}
  \hfill
  \begin{subfigure}{0.49\linewidth}
    \centering
    \includegraphics[width=\linewidth]{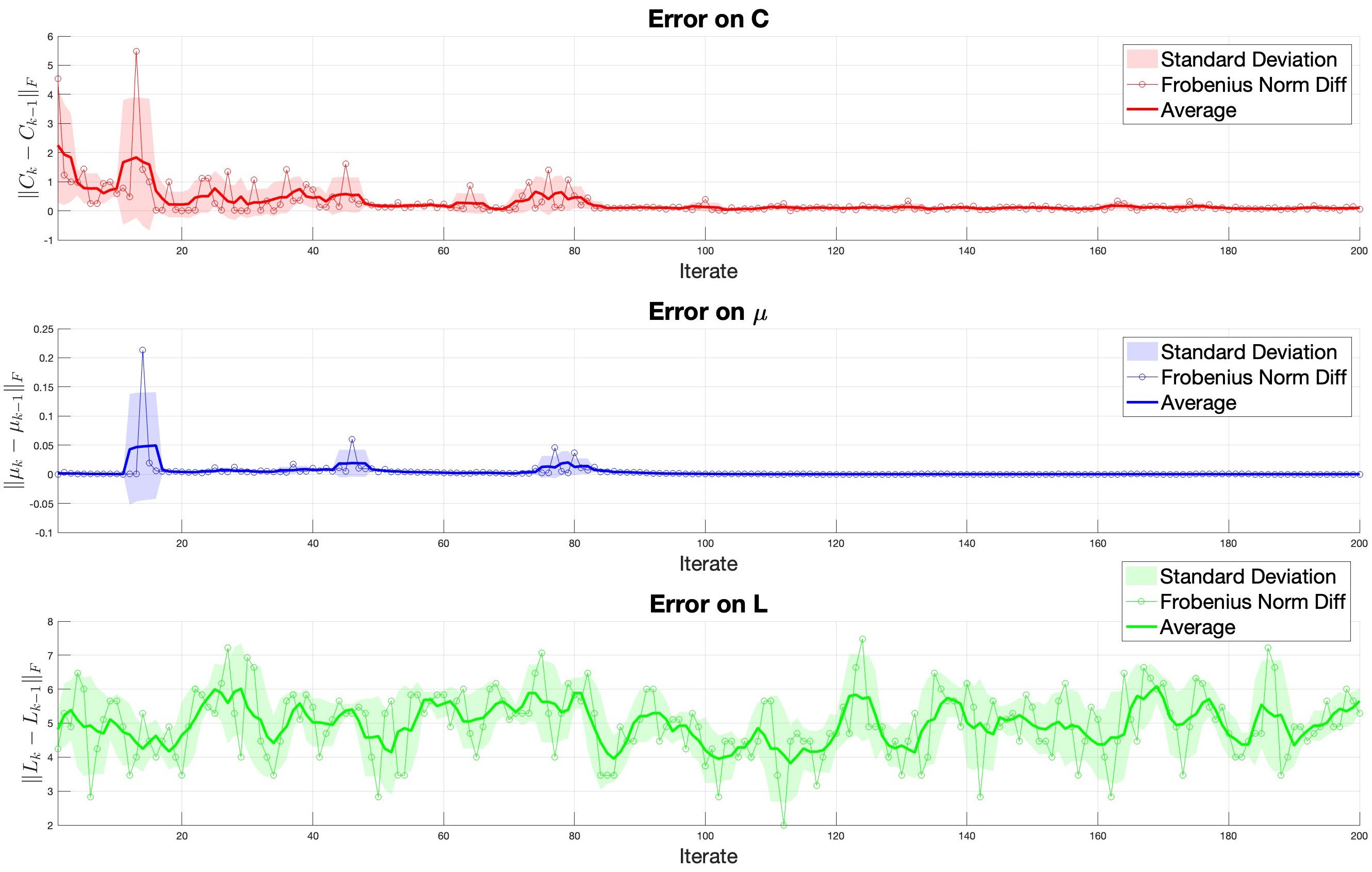}
    \caption{}
    \label{fig:ex1_error}
  \end{subfigure}
  \caption{(a) Objective function value during iterations and (b) Frobenius norm error of $\bm{C}$, $\bm{\mu}$, and $\bm{L}$ during iterations for T-ARC with persistence-based similarity matrix on Example 1.}
  \label{fig:ex1}
\end{figure}

When the sets are well separated ($\sigma=0.05$), T-ARC with persistence-based similarity achieves perfect Accuracy and F1-score ($1.00$), matching K-means, Spectral clustering and its Euclidean-based variant (Table~\ref{tab:results}). The Silhouette scores confirm high cluster coherence for all methods ($0.97$) except Spherical K-means ($0.02$), whose angular-similarity assumption does not match this data structure; Spherical K-means remains near $0.51$ Accuracy for all values of $\sigma$. We use this configuration as the reference one, illustrated in Fig.~\ref{fig:example1} and Fig.~\ref{fig:ex1}, which show the dataset, initialization, and final assignment, and the objective/error trajectories, respectively. Fig.~\ref{fig:ex1_objective} shows the objective decreasing and stabilizing, in agreement with the theory, while Fig.~\ref{fig:ex1_error} shows the errors on the variables.

As the sets get closer ($\sigma=0.12$), T-ARC (persistence) still matches K-means and Spectral clustering ($0.99$), while the Euclidean-based variant is slightly lower ($0.96$). When the sets overlap ($\sigma=0.20$), points drawn from the tail of one cluster fall in the region of the other, and the Accuracy of T-ARC (persistence) drops to $0.83$, below K-means ($0.93$), Spectral clustering ($0.95$) and the Euclidean-based variant ($0.92$). This behavior reflects the very property that makes the persistence-based prior effective on non-convex structures: since the similarity between two points is determined by the minimax linking path between them in the Vietoris--Rips filtration, it follows the connectivity of the data rather than its average geometry, which allows T-ARC to trace elongated and interleaved clusters (as shown in the following). When the two sets overlap, however, a few points in the overlap region are enough to connect them, and the connectivity signal no longer separates the clusters; in this regime the Euclidean-based variant, which aggregates all pairwise distances, is more robust. The two similarity constructions thus play complementary roles, and this experiment delineates the regime in which the persistence-based prior is most informative, namely when the persistence gap between intra- and inter-cluster merges is preserved.

\paragraph{Example 2: Two-Moons Dataset}\label{ex:ex2} 
We generated a two-dimensional point cloud of $160$ points forming two non-linearly separable half-moon clusters, centered around $(0.3,0.6)$ and $(0.7,0.3)$, with Gaussian noise $\sigma = 0.02$. Fig.~\ref{fig:ex2} shows the dataset, initialization, and final assignment.
\begin{figure}[h!]
        \centering
        \includegraphics[width=\linewidth]{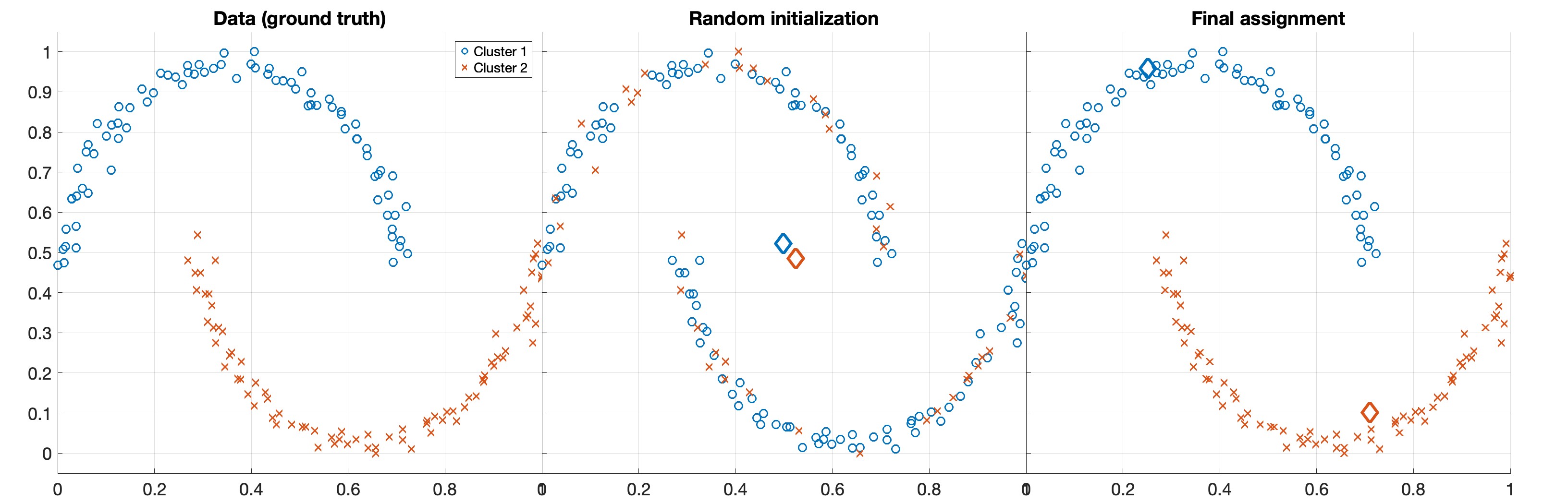}
  \caption{Three-panel illustration of the Two-Moons dataset.
Left: the original data points, colored by ground-truth labels.
Center: random initialization with $k=4$ clusters.
Right: final assignment obtained by T-ARC using the persistence-based similarity matrix.}
  \label{fig:ex2}
\end{figure}

\begin{figure}[h!]
  \centering
  \begin{subfigure}{0.49\linewidth}
    \centering
    \includegraphics[width=\linewidth]{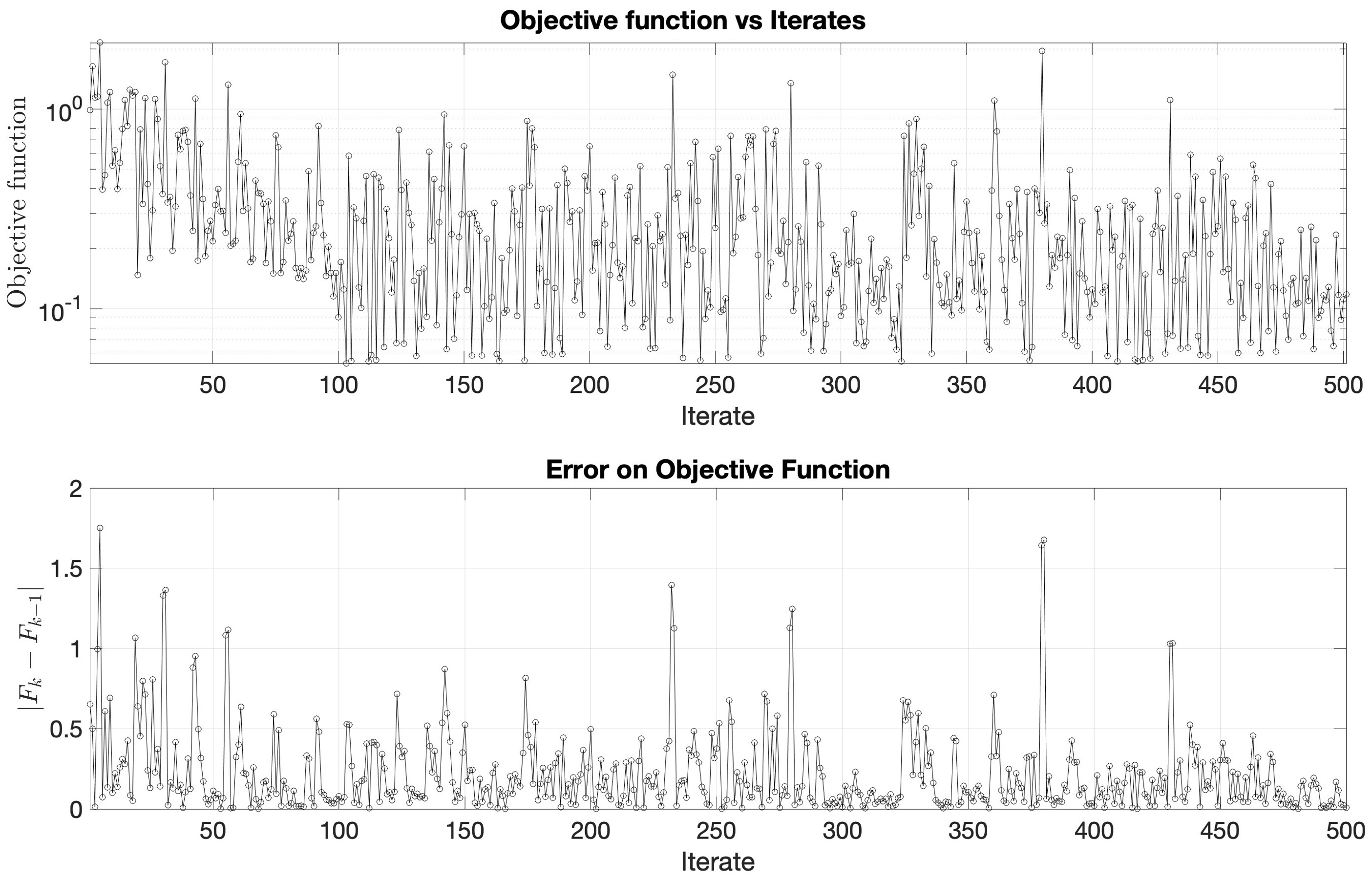}
    \caption{}
    \label{fig:ex2_objective}
  \end{subfigure}
  \hfill
  \begin{subfigure}{0.49\linewidth}
    \centering
    \includegraphics[width=\linewidth]{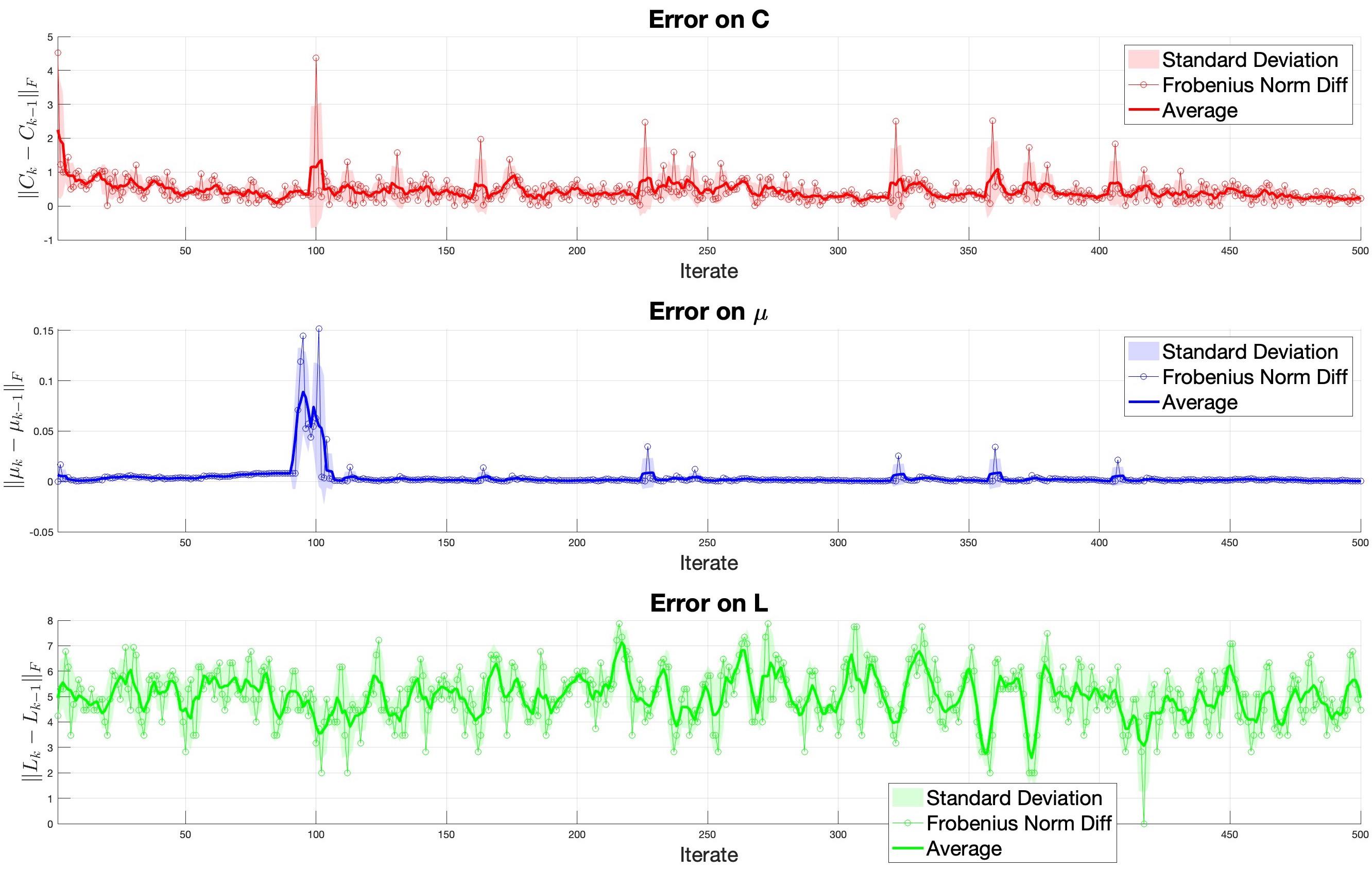}
    \caption{}
    \label{fig:ex2_error}
  \end{subfigure}
  \caption{(a) Objective function value during iterations and (b) Frobenius norm error of $\bm{C}$, $\bm{\mu}$, and $\bm{L}$ during iterations for T-ARC with persistence-based similarity matrix on Example 2.}
  \label{fig:example2}
\end{figure}
This dataset is known to be challenging for linear clustering models such as K-means. Incorporating the randomized Laplacian guided by the persistence-based similarity matrix enables T-ARC to follow the topological structure of the data, accurately identifying both connected components. Both T-ARC (persistence) and Spectral clustering achieve perfect Accuracy and F1-score ($1.00$), while K-means ($0.89$) and its Euclidean-based T-ARC variant ($0.87$) fall short. The Silhouette score of T-ARC (persistence) ($0.61$) is slightly below K-means ($0.67$), which is expected since the Silhouette coefficient measures cluster compactness in Euclidean space, favoring the convex partitions produced by K-means. Despite this, T-ARC correctly identifies the curved structure of the data, whereas K-means produces geometrically compact but topologically incorrect clusters. The Euclidean-based T-ARC version ($0.66$) achieves a higher Silhouette than the persistence-based variant, reflecting its more geometric nature, while Spectral clustering ($0.61$) matches T-ARC (persistence), confirming that the persistence prior guides the algorithm toward a topologically coherent partition rather than a purely geometric one.

\paragraph{Example 3: Two-Spirals Dataset}\label{ex:ex_spiral}
We generated a two-dimensional point cloud of $120$ points ($60$ per cluster) forming two interleaved Archimedean spirals, the second obtained by rotating the first by $\pi$. Each arm is parametrized as $(t\cos t,\, t\sin t)$ with $t \in [\pi/2,\, 2\pi]$, sampled with approximately uniform density along the arc, and isotropic Gaussian noise with $\sigma = 0.03$ is added to each point. Fig.~\ref{fig:ex_spiral} shows the dataset, initialization, and final assignment.
\begin{figure}[h!]
        \centering
        \includegraphics[width=\linewidth]{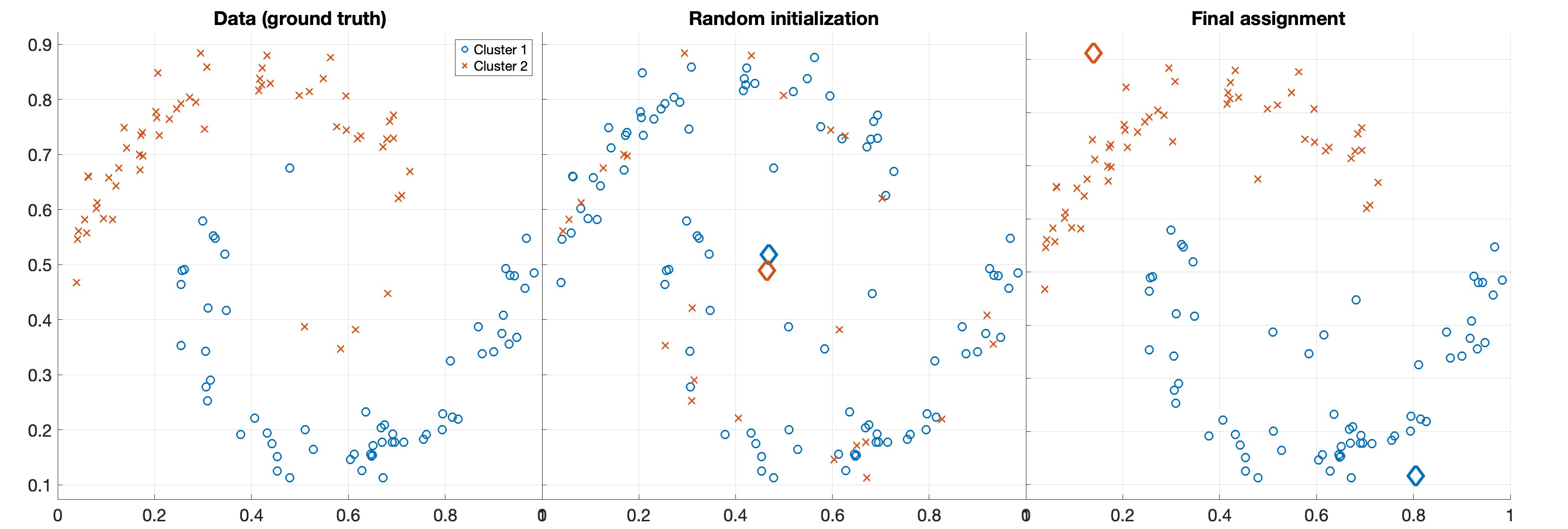}
  \caption{Three-panel illustration of the Two-Spirals dataset.
Left: the original data points, colored by ground-truth labels.
Center: random initialization with $k=2$ clusters.
Right: final assignment obtained by T-ARC using the persistence-based similarity matrix.}
  \label{fig:ex_spiral}
\end{figure}

\begin{figure}[h!]
  \centering
  \begin{subfigure}{0.49\linewidth}
    \centering
    \includegraphics[width=\linewidth]{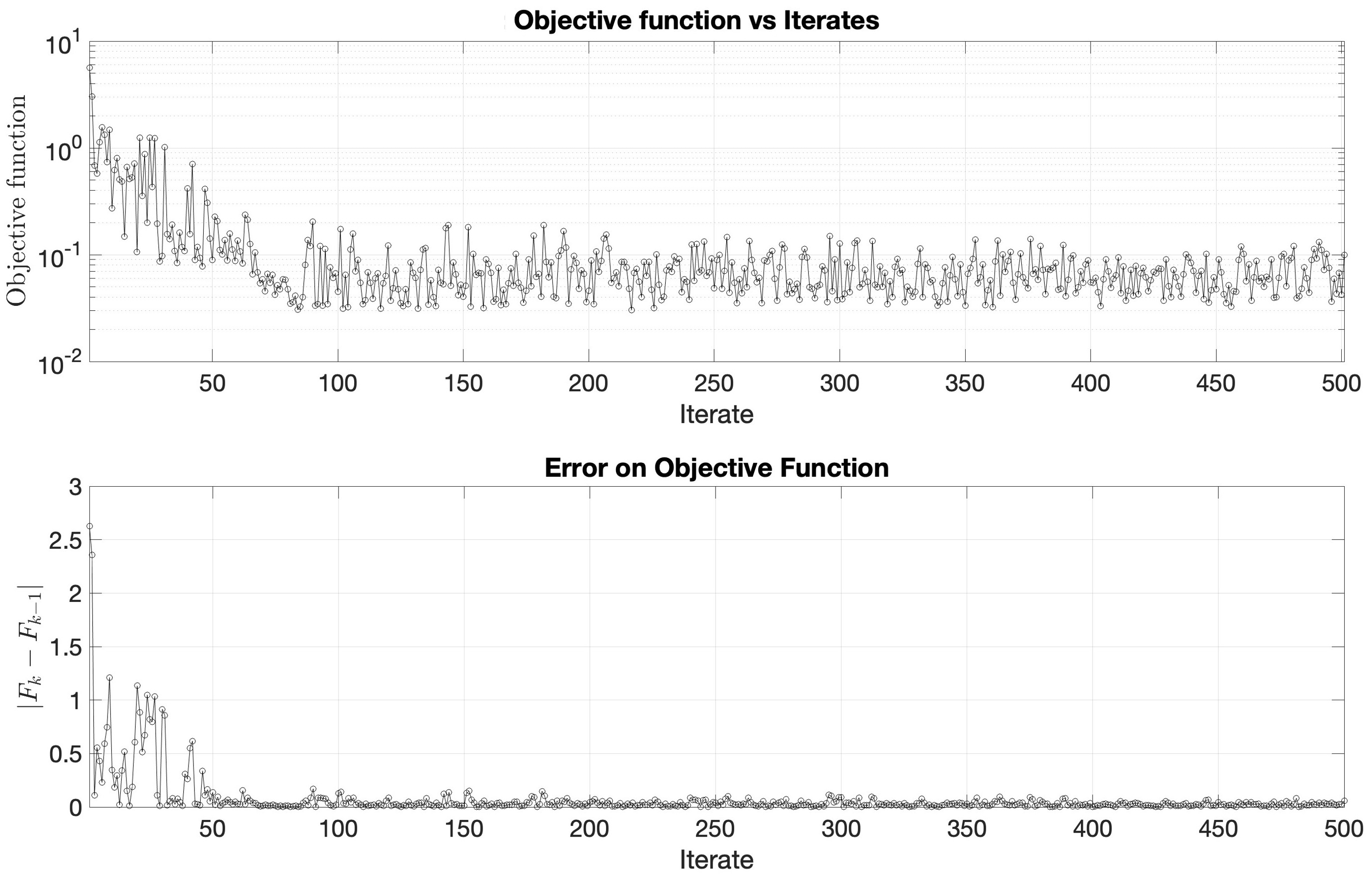}
    \caption{}
    \label{fig:ex_spiral_objective}
  \end{subfigure}
  \hfill
  \begin{subfigure}{0.49\linewidth}
    \centering
    \includegraphics[width=\linewidth]{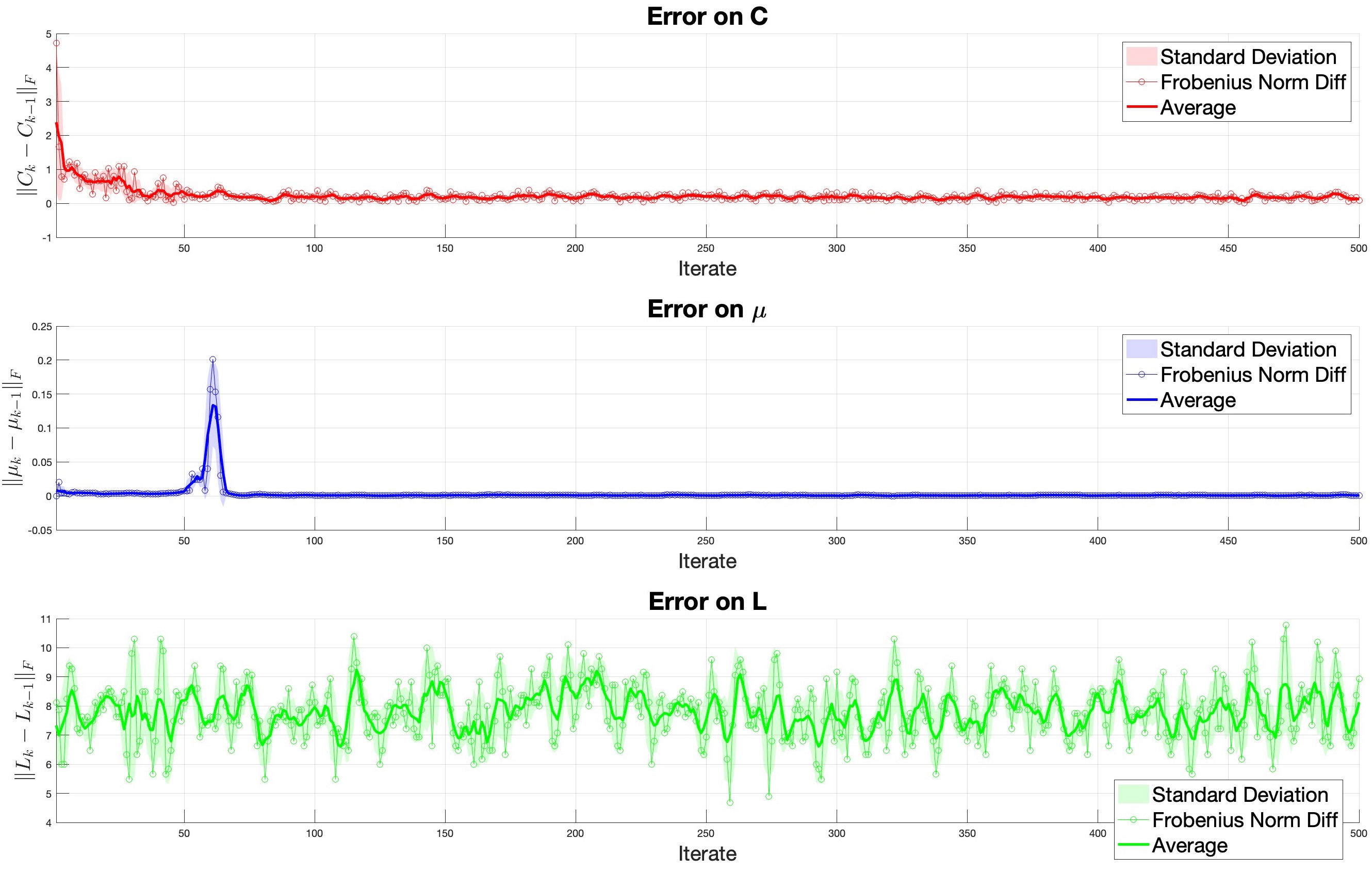}
    \caption{}
    \label{fig:ex_spiral_error}
  \end{subfigure}
  \caption{(a) Objective function value during iterations and (b) Frobenius norm error of $\bm{C}$, $\bm{\mu}$, and $\bm{L}$ during iterations for T-ARC with persistence-based similarity matrix on the Two-Spirals dataset.}
  \label{fig:example_spiral}
\end{figure}

Interleaved spirals are a classical benchmark for clustering methods, as the two clusters are not linearly separable and each arm winds around the other, so that points belonging to different clusters can be closer in Euclidean distance than points at opposite ends of the same arm. Centroid-based methods are therefore structurally unable to recover the correct partition. On this configuration, T-ARC (persistence) achieves the highest Accuracy and F1-score ($0.96$), clearly outperforming K-means and Spherical K-means (both $0.84$), its Euclidean-based variant ($0.85$), and Spectral clustering ($0.78$ and $0.76$, respectively). The persistence-based similarity encodes the connectivity of each arm at the scale at which the two spirals are still separate connected components, and the randomized Laplacian guided by this similarity allows T-ARC to follow the winding structure of the data. Spectral clustering is more exposed to spurious connections between adjacent arms, particularly near the center of the spirals, where the arms are closest.

As in the Two-Moons case, the Silhouette score shows the opposite ordering: K-means, Spherical K-means and T-ARC (Euclidean) attain the highest values ($0.68$), while T-ARC (persistence) ($0.58$) and Spectral clustering ($0.45$) lag behind. This is expected, since the Silhouette coefficient measures cluster compactness in Euclidean space and therefore rewards the convex partitions produced by centroid-based methods, which here cut across both spirals. The partition found by T-ARC (persistence) follows the arms and is topologically correct, but it is necessarily less compact in the Euclidean sense, which the Silhouette coefficient penalizes.

\paragraph{Example 4: Two-Eights Dataset}\label{ex:ex3}
We generated $160$ two-dimensional points arranged as four circles (with $40$ points per cluster) of radius $r=0.12$, grouped into two vertically-aligned pairs forming two ``figure-eight'' shapes. Circle centers are $(0.35,0.62)$, $(0.35,0.38)$, $(0.70,0.62)$, $(0.70,0.38)$, with Gaussian noise $\sigma=0.01$. The algorithm is initialized with $k=4$ (the number of circles) to probe whether T-ARC can identify higher-level topological structure. Fig.~\ref{fig:example3} shows the dataset, initialization, and final assignment.
\begin{figure}[h!]
        \centering
        \includegraphics[width=\linewidth]{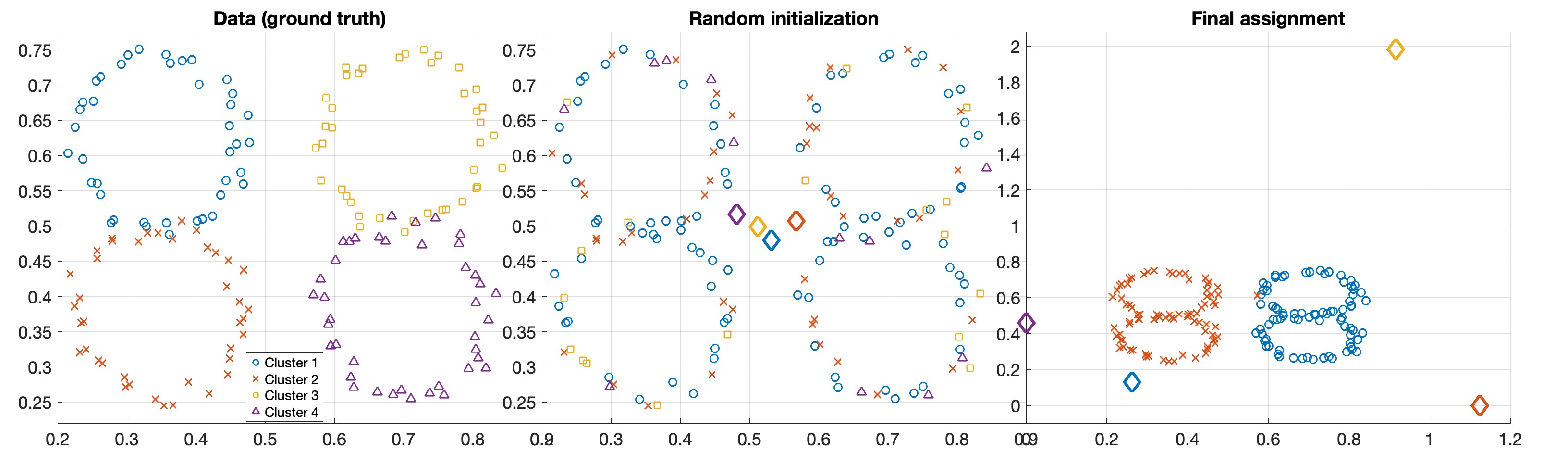}
  \caption{Three-panel illustration of the Two-Eights dataset.
Left: the original data points, colored by ground-truth labels.
Center: random initialization with $k=4$ clusters.
Right: final assignment obtained by T-ARC using the persistence-based similarity matrix; the algorithm collapses the four initial clusters into two macro-clusters, each corresponding to one of the two figure-eight shapes.}
  \label{fig:example3}
\end{figure}
\begin{figure}[h!]
  \centering
  \begin{subfigure}{0.49\linewidth}
    \centering
    \includegraphics[width=\linewidth]{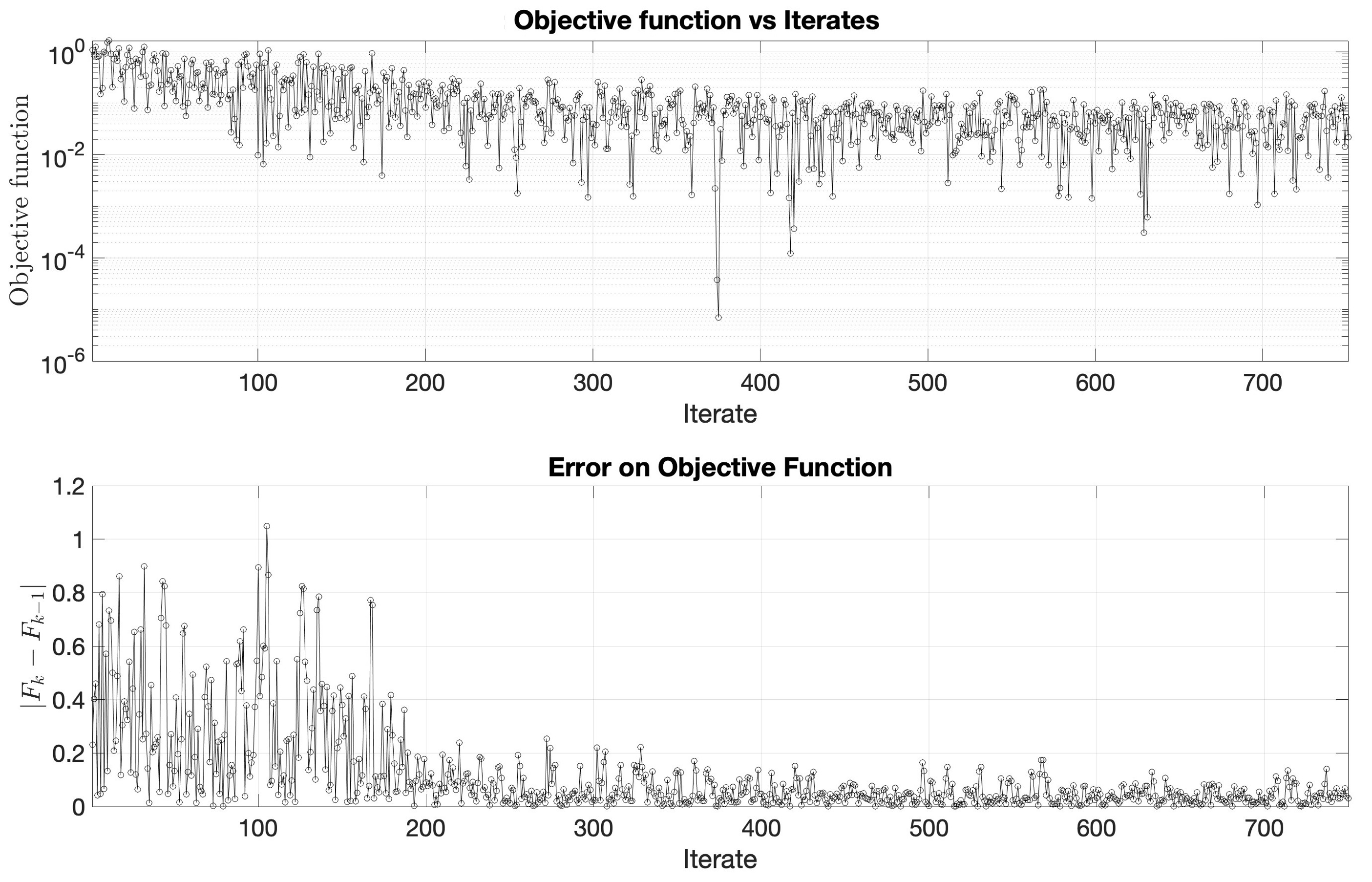}
    \caption{}
    \label{fig:ex3_objective}
  \end{subfigure}
  \hfill
  \begin{subfigure}{0.49\linewidth}
    \centering
    \includegraphics[width=\linewidth]{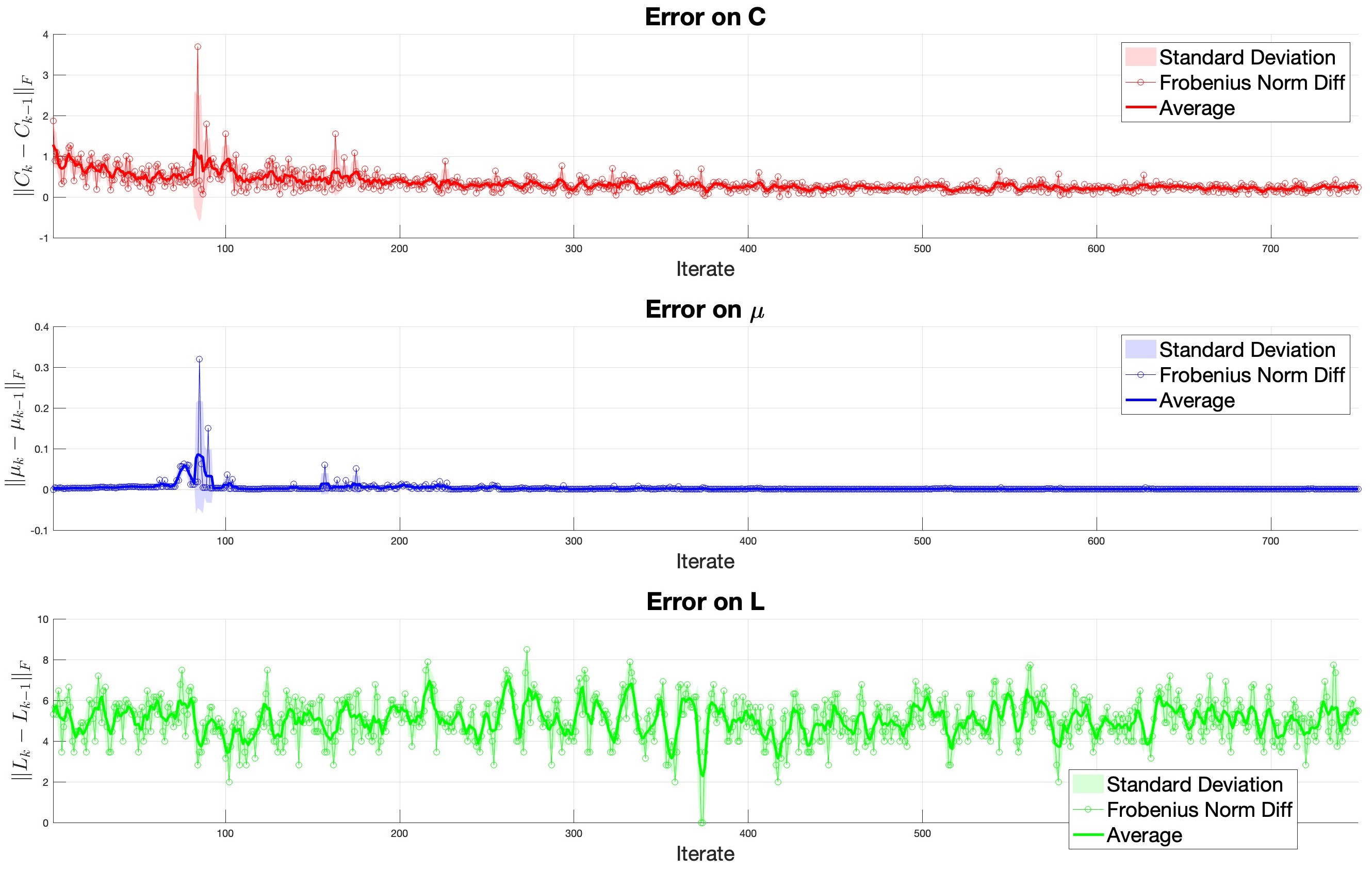}
    \caption{}
    \label{fig:ex3_error}
  \end{subfigure}
  \caption{(a) Objective function value during iterations and (b) Frobenius norm error of $\bm{C}$, $\bm{\mu}$, and $\bm{L}$ during iterations for T-ARC with persistence-based similarity matrix on Example 4.}
\label{fig:example3_graph}
\end{figure}
The external indices in Table~\ref{tab:results} may initially seem unfavorable for T-ARC (persistence): Accuracy $0.50$ and F1-score $0.33$ against K-means ($0.88$) and Spectral clustering ($0.87$). However, this comparison is misleading, and the behavior of T-ARC on this dataset is in fact particularly interesting:
\begin{itemize}[leftmargin=*]\setlength{\itemsep}{0pt}
\item Although initialized with $k=4$ clusters, T-ARC (persistence) converges to a partition into two groups, each corresponding to one of the two figure-eight shapes, rather than to the four individual circles. This partition matches the connected components of the data at the scale at which the two eights are separated, and is thus consistent with their intrinsic topological structure.
\item This result stems from the topology-aware structure of T-ARC: since the persistence-based prior enters the optimization through the graph-cut term, the algorithm favors partitions that are consistent with the connected components of the data at the relevant filtration scale, and therefore identifies the two figure-eight shapes rather than the four individual circles.
\item The Silhouette score supports this interpretation since the two figure-eight shapes are well separated: the topology-driven partition is also geometrically compact, and the two-cluster partition of T-ARC (persistence) attains $0.64$, against $0.56$ and $0.57$ for the four-cluster partitions of K-means and Spectral clustering, although partitions with different numbers of clusters are not directly comparable.
\end{itemize}
This result highlights the ability of the proposed approach to capture the underlying manifold structure rather than forcing a purely geometric partition of the data. Consequently, even though standard classification metrics may appear less favorable, the method successfully uncovers the topological organization of the dataset, demonstrating its robustness in settings where the true structure is governed by nonlinear relationships. Notably, T-ARC's behavior reflects a different strength: it identifies the higher-level topological structure (the two ``eights'') rather than the individual circles.

\paragraph{Example 5: Chain-link Dataset}\label{ex:ex4}

We generated a three-dimensional point cloud of $160$ points forming two circular structures (rings) embedded in $\mathbb{R}^3$, arranged in a \emph{chain-link configuration}. The first ring lies in the $XY$-plane and is centered at the origin, while the second lies in the $XZ$-plane and is shifted so that the two structures touch at a single point. The rings exhibit different sampling densities and noise levels: the first is densely sampled with low Gaussian noise ($\sigma = 0.03$), whereas the second is sparser and affected by higher noise ($\sigma = 0.12$). This dataset poses a significant challenge for clustering algorithms, as the two structures are nonlinearly separable and intersect at a point, while the imbalance in density and noise further increases the difficulty of the task.

On this dataset, the Euclidean-based variant of T-ARC achieves the best external indices, with an Accuracy and F1-score of $0.91$, outperforming Spectral clustering ($0.84$), K-means and T-ARC (persistence) (both $0.83$), and Spherical K-means ($0.51$). This is a notable departure from the other datasets: the persistence-based prior, which excels when the topological structure dominates the geometry, is here slightly less discriminative than the Euclidean similarity, likely because the two rings intersect at a single point and the persistence prior merges them into a single connected component at the relevant scale. In terms of internal validity, the results are more nuanced. The Silhouette scores are relatively close across methods, with T-ARC (persistence) and K-means tied at the top ($0.52$), closely followed by Spectral clustering ($0.51$), while T-ARC (Euclidean) attains $0.41$. This behavior is consistent with the geometric nature of the Silhouette coefficient, which favors convex and well-separated clusters even when they do not reflect the underlying topology.

Overall, the experiment highlights a trade-off between geometric compactness and topological correctness. Interestingly, the two T-ARC variants play complementary roles on this dataset: the Euclidean variant recovers the two rings more accurately, while the persistence variant, whose scores coincide with those of K-means, yields a more compact but less faithful partition, since the persistence prior merges the two rings at their contact point. This suggests that the Euclidean construction is preferable in the presence of intersecting structures, consistently with the overlapping Two sets case.

\begin{figure}[h!]
        \centering
        \includegraphics[width=\linewidth]{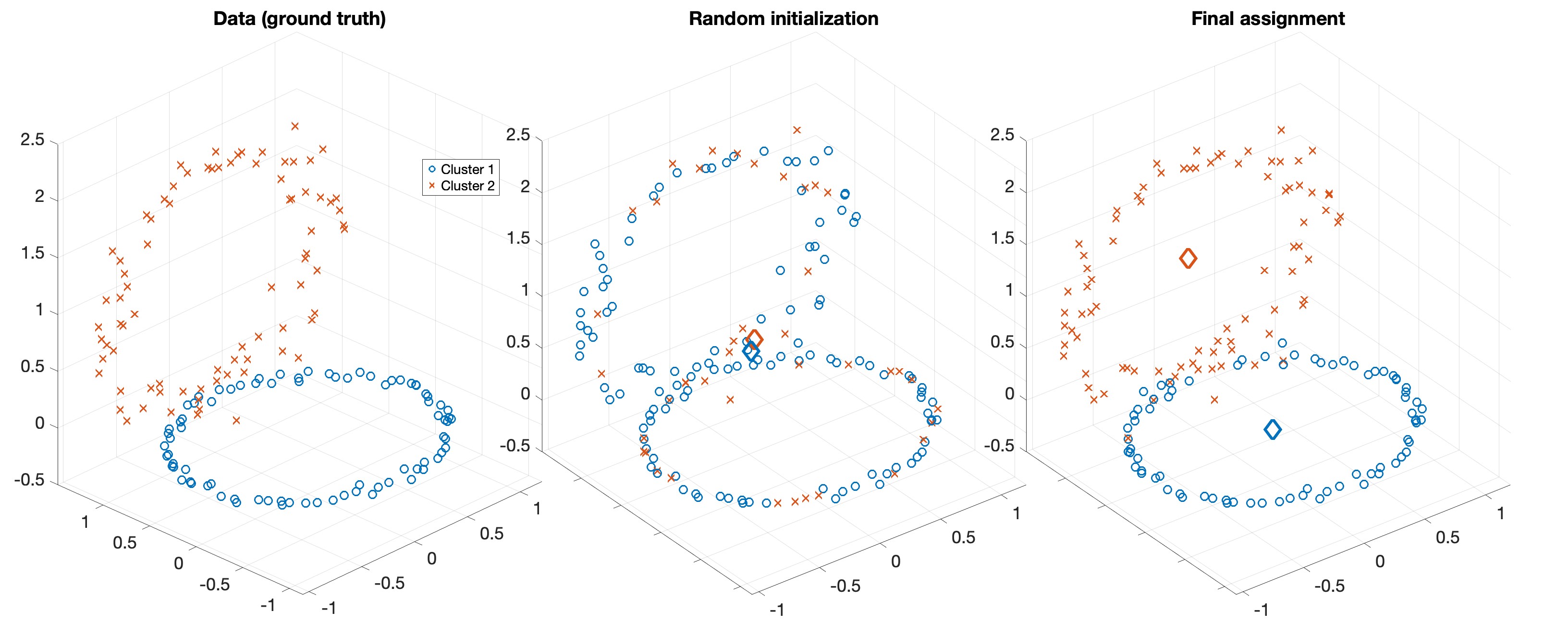}
  \caption{Three-panel illustration of the Chain-link dataset.
Left: the original data points, colored by ground-truth labels.
Center: random initialization with $k=4$ clusters.
Right: final assignment obtained by T-ARC using the euclidean-based similarity matrix.}
  \label{fig:example4}
\end{figure}

\begin{figure}[h!]
  \centering
  \begin{subfigure}{0.49\linewidth}
    \centering
    \includegraphics[width=\linewidth]{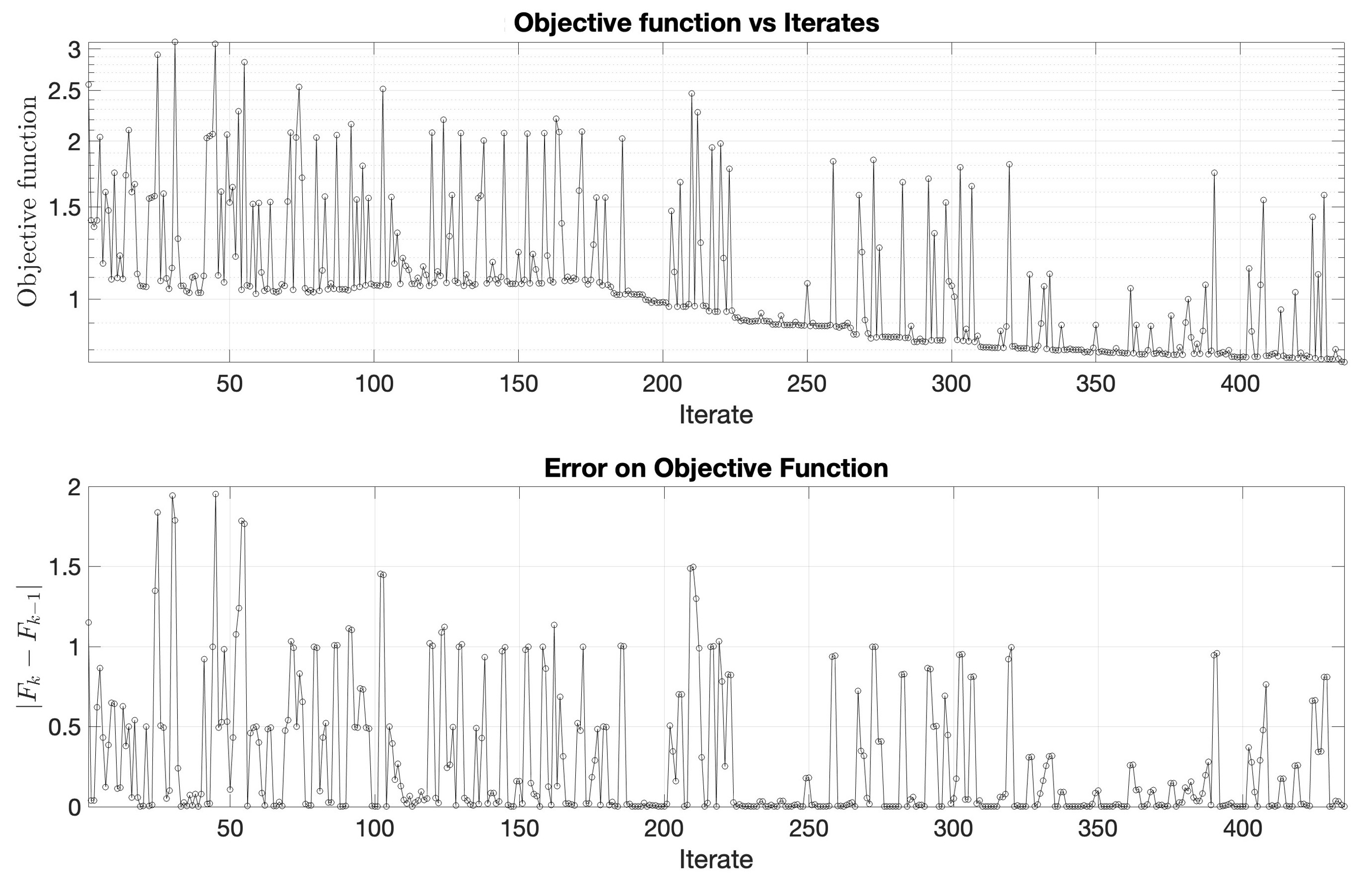}
    \caption{}
    \label{fig:ex4_objective}
  \end{subfigure}
  \begin{subfigure}{0.49\linewidth}
    \centering
    \includegraphics[width=\linewidth]{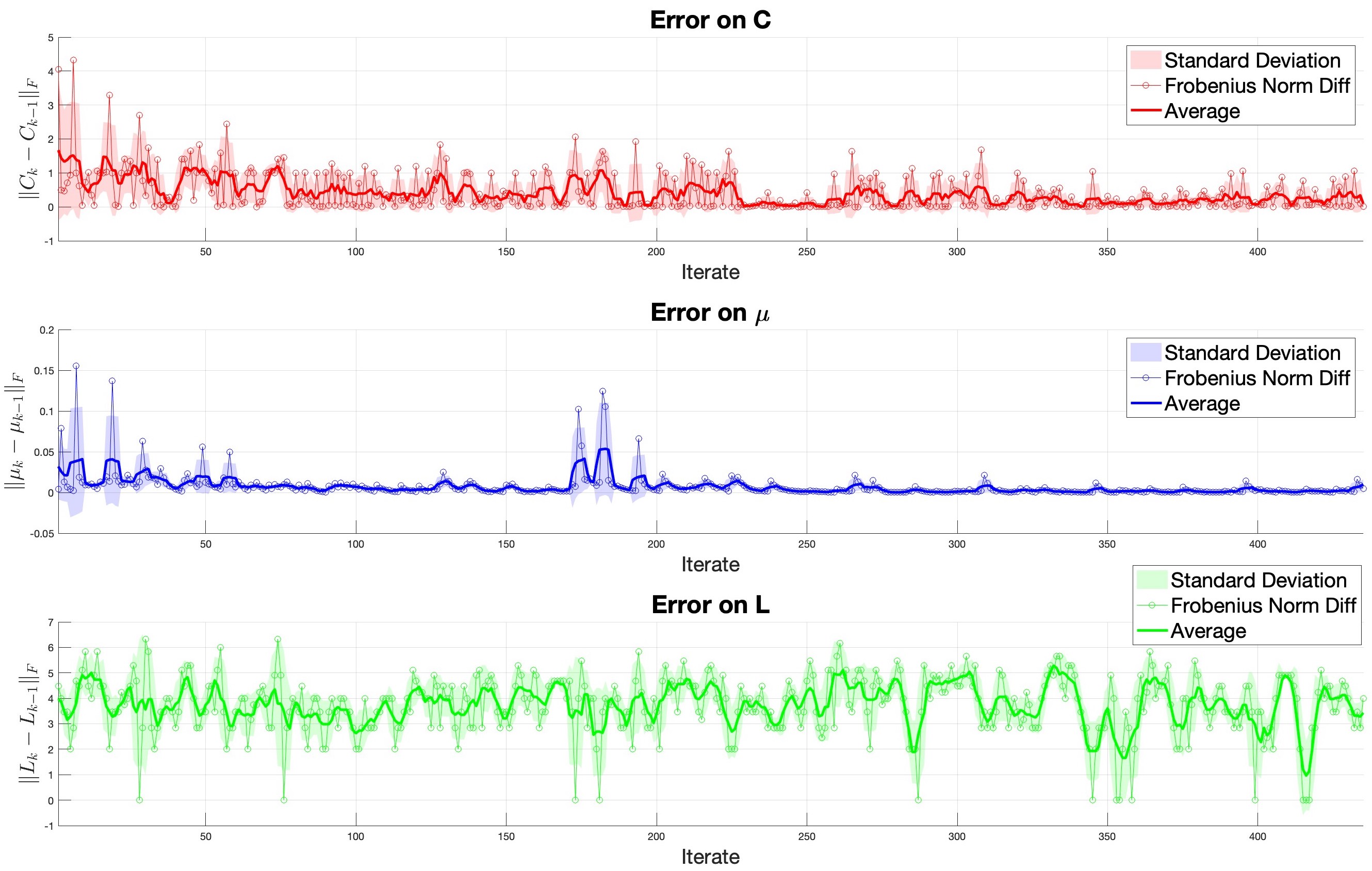}
    \caption{}
    \label{fig:ex4_error}
  \end{subfigure}
  \caption{(a) Objective function value during iterations and (b) Frobenius norm error of $\bm{C}$, $\bm{\mu}$, and $\bm{L}$ during iterations for T-ARC with euclidean-based similarity matrix on Example 5.}
  \label{fig:ex4}
\end{figure}

\paragraph{Example 6: Fashion-MNIST Subset}\label{ex:ex5}

As a real-world application, we consider subsets of the \textit{Fashion-MNIST} \cite{fashion_MNIST_dataset} dataset. We draw $10$ random subsets, each made of $50$ samples from each of three Fashion-MNIST classes---\emph{Trouser}, \emph{Sneaker}, and \emph{Bag}---so that every subset contains $150$ grayscale $28\times28$ images, each represented as a vector in $\mathbb{R}^{784}$ with pixel intensities rescaled to $[0,1]$. All methods are run on each subset from the same initialization, and we report mean $\pm$ standard deviation over the $10$ subsets.

T-ARC (persistence) attains an Accuracy of $0.91 \pm 0.02$ and an F1-score of $0.91 \pm 0.02$, and its Euclidean-based variant $0.91 \pm 0.03$ and $0.91 \pm 0.03$; the persistence-based prior matches or outperforms the Euclidean one in $9$ of the $10$ subsets (Table~\ref{tab:fashion_subsets}). Both variants outperform Spectral clustering ($0.79 \pm 0.03$ Accuracy, $0.76 \pm 0.05$ F1-score) and, on average, K-means ($0.89 \pm 0.13$ Accuracy, $0.88 \pm 0.14$ F1-score). The comparison with K-means is best read in terms of stability: started from the same initialization, K-means occasionally converges to poor local minima (down to $0.53$ Accuracy on one subset), whereas T-ARC never does, with a standard deviation about five to six times smaller. Spherical K-means attains the best external indices ($0.95 \pm 0.04$ Accuracy, $0.95 \pm 0.04$ F1-score), consistently with the angular structure of normalized pixel intensities. On the internal side, T-ARC (persistence) achieves the highest Silhouette score among all methods ($0.49 \pm 0.02$, versus $0.47 \pm 0.03$ for Spherical K-means and $0.46 \pm 0.08$ for K-means), indicating that its partitions are the most compact ones in the Euclidean sense.

Overall, this experiment shows that, on real high-dimensional data, T-ARC is competitive with classical methods, markedly more stable than K-means under the same initialization, and yields the most compact partitions, while Spherical K-means remains the strongest method in terms of label agreement on this dataset.

\begin{table}[h!]
\centering
\caption{Performance metrics of T-ARC and baseline methods on each of the $10$ random Fashion-MNIST subsets ($50$ images per class). Best value per metric and subset in \textbf{bold}; the last column reports mean $\pm$ standard deviation over the subsets.}
\label{tab:fashion_subsets}
\small
\setlength{\tabcolsep}{3pt}
\renewcommand{\arraystretch}{1.15}
\begin{adjustbox}{max width=\textwidth}
\begin{tabular}{@{}c l *{11}{c}@{}}
\toprule
 & \textbf{Method}
 & \multicolumn{10}{c}{\textbf{Subset}}
 & \textbf{Mean $\pm$ std} \\
\cmidrule(lr){3-12}
 & & 1 & 2 & 3 & 4 & 5 & 6 & 7 & 8 & 9 & 10 & \\
\midrule
\multirow{5}{*}{\textbf{Accuracy}}
 & T-ARC (persistence)  & 0.91 & 0.91 & 0.89 & 0.92 & 0.95 & 0.90 & 0.94 & 0.92 & 0.92 & 0.88 & $0.91 \pm 0.02$ \\
 & T-ARC (Euclidean)    & 0.91 & 0.89 & 0.89 & 0.93 & 0.95 & 0.89 & 0.93 & 0.92 & 0.90 & 0.88 & $0.91 \pm 0.03$ \\
 & K-means              & 0.93 & 0.53 & \textbf{0.92} & 0.95 & \textbf{0.97} & 0.87 & 0.93 & \textbf{0.95} & \textbf{0.95} & 0.87 & $0.89 \pm 0.13$ \\
 & Spherical K-means    & \textbf{0.96} & \textbf{0.99} & 0.87 & \textbf{0.99} & \textbf{0.97} & \textbf{0.94} & \textbf{0.95} & 0.91 & \textbf{0.95} & \textbf{0.93} & $\mathbf{0.95 \pm 0.04}$ \\
 & Spectral clustering  & 0.80 & 0.81 & 0.77 & 0.76 & 0.83 & 0.73 & 0.81 & 0.77 & 0.83 & 0.77 & $0.79 \pm 0.03$ \\
\midrule
\multirow{5}{*}{\textbf{F1-score}}
 & T-ARC (persistence)  & 0.91 & 0.91 & 0.88 & 0.92 & 0.95 & 0.90 & 0.94 & 0.92 & 0.92 & 0.88 & $0.91 \pm 0.02$ \\
 & T-ARC (Euclidean)    & 0.91 & 0.88 & 0.88 & 0.93 & 0.95 & 0.89 & 0.93 & 0.92 & 0.90 & 0.88 & $0.91 \pm 0.03$ \\
 & K-means              & 0.93 & 0.49 & \textbf{0.92} & 0.95 & \textbf{0.97} & 0.86 & 0.93 & \textbf{0.95} & \textbf{0.95} & 0.87 & $0.88 \pm 0.14$ \\
 & Spherical K-means    & \textbf{0.96} & \textbf{0.99} & 0.86 & \textbf{0.99} & \textbf{0.97} & \textbf{0.94} & \textbf{0.95} & 0.91 & \textbf{0.95} & \textbf{0.93} & $\mathbf{0.95 \pm 0.04}$ \\
 & Spectral clustering  & 0.78 & 0.79 & 0.75 & 0.73 & 0.83 & 0.68 & 0.80 & 0.73 & 0.82 & 0.75 & $0.76 \pm 0.05$ \\
\midrule
\multirow{5}{*}{\textbf{Silhouette}}
 & T-ARC (persistence)  & 0.45 & \textbf{0.50} & \textbf{0.50} & \textbf{0.50} & \textbf{0.51} & \textbf{0.46} & 0.53 & 0.48 & \textbf{0.49} & \textbf{0.47} & $\mathbf{0.49 \pm 0.02}$ \\
 & T-ARC (Euclidean)    & \textbf{0.49} & \textbf{0.50} & \textbf{0.50} & 0.49 & 0.50 & \textbf{0.46} & 0.52 & 0.49 & 0.41 & 0.40 & $0.48 \pm 0.04$ \\
 & K-means              & 0.48 & 0.23 & 0.49 & \textbf{0.50} & \textbf{0.51} & 0.45 & \textbf{0.54} & 0.49 & 0.48 & \textbf{0.47} & $0.46 \pm 0.08$ \\
 & Spherical K-means    & 0.46 & 0.45 & 0.49 & 0.48 & \textbf{0.51} & 0.41 & 0.52 & \textbf{0.50} & 0.46 & 0.44 & $0.47 \pm 0.03$ \\
 & Spectral clustering  & 0.33 & 0.36 & 0.33 & 0.32 & 0.36 & 0.17 & 0.31 & 0.31 & 0.33 & 0.32 & $0.31 \pm 0.05$ \\
\bottomrule
\end{tabular}
\end{adjustbox}
\end{table}

\begin{table}[h!]
\centering
\caption{Performance metrics of T-ARC and baseline methods across all experiments.
  Best values per metric and dataset are in \textbf{bold}.
  $\dagger$ T-ARC (persistence) converges to $k=2$ macro-clusters
  (one per eight); external indices computed w.r.t.\ the $k=4$ ground truth.
  Fashion-MNIST values are mean $\pm$ standard deviation over the $10$ subsets of Table~\ref{tab:fashion_subsets}.}
\label{tab:results}
\small
\setlength{\tabcolsep}{3pt}
\renewcommand{\arraystretch}{1.15}
\ifdefined\TeXspan\let\span\TeXspan\fi
\begin{adjustbox}{max width=\textwidth}
\begin{tabular}{@{}c l *{9}{c}@{}}
\toprule
 & \textbf{Method}
 & \multicolumn{3}{c}{\textbf{Two sets}}
 & \textbf{Two moons}
 & \textbf{Two spirals}
 & \textbf{Two eights$^\dagger$}
 & \textbf{Chain-link}
 & \textbf{Fashion-MNIST} \\
\cmidrule(lr){3-5}
 & & Well-separated & Closer & Overlapping
 &  & 
 & & & \\
\midrule
\multirow{5}{*}{\textbf{Accuracy}}
 & T-ARC (persistence)     & \textbf{1.00} & \textbf{0.99} & 0.83          & \textbf{1.00} & \textbf{0.96} & 0.50          & 0.83          & $0.91 \pm 0.02$ \\
 & T-ARC (Euclidean)       & \textbf{1.00} & 0.96          & 0.92          & 0.87          & 0.85          & 0.60          & \textbf{0.91} & $0.91 \pm 0.03$ \\
 & K-means                 & \textbf{1.00} & \textbf{0.99} & 0.93          & 0.89          & 0.84          & \textbf{0.88} & 0.83          & $0.89 \pm 0.13$ \\
 & Spherical K-means       & 0.51          & 0.51          & 0.51          & 0.87          & 0.84          & 0.53          & 0.51          & $\mathbf{0.95 \pm 0.04}$ \\
 & Spectral clustering     & \textbf{1.00} & \textbf{0.99} & \textbf{0.95} & \textbf{1.00} & 0.78          & 0.87          & 0.84          & $0.79 \pm 0.03$ \\
\midrule
\multirow{5}{*}{\textbf{F1-score}}
 & T-ARC (persistence)     & \textbf{1.00} & \textbf{0.99} & 0.83          & \textbf{1.00} & \textbf{0.96} & 0.33          & 0.83          & $0.91 \pm 0.02$ \\
 & T-ARC (Euclidean)       & \textbf{1.00} & 0.96          & 0.92          & 0.87          & 0.85          & 0.53          & \textbf{0.91} & $0.91 \pm 0.03$ \\
 & K-means                 & \textbf{1.00} & \textbf{0.99} & 0.93          & 0.89          & 0.84          & \textbf{0.88} & 0.83          & $0.88 \pm 0.14$ \\
 & Spherical K-means       & 0.51          & 0.51          & 0.51          & 0.87          & 0.84          & 0.55          & 0.51          & $\mathbf{0.95 \pm 0.04}$ \\
 & Spectral clustering     & \textbf{1.00} & \textbf{0.99} & \textbf{0.95} & \textbf{1.00} & 0.76          & 0.87          & 0.84          & $0.76 \pm 0.05$ \\
\midrule
\multirow{5}{*}{\textbf{Silhouette}}
 & T-ARC (persistence)     & \textbf{0.97} & \textbf{0.85} & 0.51          & 0.61          & 0.58          & \textbf{0.64} & \textbf{0.52} & $\mathbf{0.49 \pm 0.02}$ \\
 & T-ARC (Euclidean)       & \textbf{0.97} & 0.80          & 0.67          & 0.66          & \textbf{0.68} & 0.18          & 0.41          & $0.48 \pm 0.04$ \\
 & K-means                 & \textbf{0.97} & \textbf{0.85} & \textbf{0.69} & \textbf{0.67} & \textbf{0.68} & 0.56          & \textbf{0.52} & $0.46 \pm 0.08$ \\
 & Spherical K-means       & 0.02          & 0.13          & 0.24          & 0.66          & \textbf{0.68} & 0.18          & 0.40          & $0.47 \pm 0.03$ \\
 & Spectral clustering     & \textbf{0.97} & \textbf{0.85} & \textbf{0.69} & 0.61          & 0.45          & 0.57          & 0.51          & $0.31 \pm 0.05$ \\
\bottomrule
\end{tabular}
\end{adjustbox}
\end{table}
\section{Conclusion and Future Work} \label{sec:conclusion}

In this work, we introduced T-ARC, a novel clustering framework that integrates topological information directly into the optimization objective to overcome the geometric limitations of classical methods such as K-means. By complementing the standard data-fidelity term with a graph-cut regularization, T-ARC promotes cluster assignments that respect the connectivity structure of the data. The key methodological contribution lies in the joint learning of the graph structure and the cluster assignments: the latent graph is modeled as a random realization from an SBM, whose parameters are optimized within a DRO framework. This formulation yields closed-form updates, solved via a proximal point method, and ensures 
robustness to uncertainty in the graph distribution.

To further anchor the framework in topological principles, SBM and DRO are informed by a persistence-based similarity matrix constructed from the zero-dimensional persistent homology. This matrix translates the multiscale connectivity structure of the dataset into a pairwise similarity representation, which serves as a fixed topological prior throughout the optimization. The overall optimization proceeds via 
BCD, and we provided a convergence analysis showing that the expected value of a Lyapunov functional is nonincreasing along the iterations, including the stochastic graph updates, and therefore convergent, with vanishing successive differences of all blocks.

Experimental validation on synthetic datasets with complex geometries demonstrated that T-ARC successfully recovers the underlying topological structure where K-means fails, as in the Two-Moons and Two-Spirals configurations. Notably, on the two-eights dataset, T-ARC (with $k=4$) identified the two higher-level clusters corresponding to the two eights, a partition consistent with the connected components of the data at the relevant scale. On Fashion-MNIST, T-ARC was competitive with the baselines in terms of external validity, outperforming K-means and Spectral clustering on average while Spherical K-means remained the strongest method in terms of label agreement; at the same time, T-ARC proved markedly more stable than K-means under the same initialization and attained the highest average Silhouette score. The persistence-based similarity matrix matched or outperformed its Euclidean counterpart whenever the topological structure dominates the geometry (well-separated and closer Two sets, Two-Moons, Two-Spirals, Fashion-MNIST), while the Euclidean variant proved more robust when clusters overlap or intersect (overlapping Two sets, Chain-link), delineating the regime in which the topological prior is most informative.

Several directions for future research emerge from this work:
\begin{itemize}
    \item Although we focused on the zero-dimensional persistent homology, the framework could naturally accommodate similarity matrices derived from higher-order persistent homology, potentially capturing loop- and void-like structures in the data.

     \item The computation of the persistence-based similarity matrix via Vietoris-Rips filtration becomes demanding for large point clouds; exploring approximate or landmark-based methods for persistent homology could extend the applicability of T-ARC to larger-scale datasets.

    \item Finally, the SBM represents only one among many possible probabilistic formulations for the latent graph; richer families of random graphs that account for degree heterogeneity or incorporate geometric structure could yield more flexible connectivity representations.
\end{itemize}

In summary, T-ARC represents a principled step toward embedding topological structure directly into clustering optimization, bridging topological data analysis and unsupervised learning.

{
\footnotesize
\paragraph{Acknowledgment}
S.G.D., N.D.B., F.E., and L.S. are members of the Gruppo Nazionale Calcolo Scientifico - Istituto Nazionale di Alta Matematica (GNCS-INdAM). S.G.D. is funded by a PhD fellowship within the framework of the Italian ``D.M. n. 117, March 2, 2023'' - under the National Recovery and Resilience Plan, Msn. 4, Comp. 2, Investment 3.3 - PhD Project ``Topological Data Analysis and optimization for industrial processes'', co-supported by ``Pirelli Tyre S.p.A.'' (CUP H91I23000170007). This work was completed during a visiting research period of S.G.D. at the University of Southampton.\\
F.E. and L.S. are partially supported by "INdAM - GNCS Project" CUP E53C25002010001.

\paragraph{Author Contributions}
S.G.D.: Conceptualization, Methodology, Software, Validation, Formal analysis, Data Curation, Investigation, Writing - Original Draft, Writing - Review \& Editing, Visualization.  
A.A.: Conceptualization, Methodology, Formal analysis, Writing - Original Draft, Writing - Review \& Editing, Supervision.  
F.E. and L.S.: Conceptualization, Methodology, Writing - Review \& Editing.
N.D.B.: Conceptualization, Methodology, Writing - Review \& Editing, Supervision.

\bibliographystyle{siam}
\bibliography{bibliography}

@article{SelMATCOM,
title = {Topological data analysis for resilience assessment of water distribution networks},
journal = {Mathematics and Computers in Simulation},
volume = {231},
pages = {62-70},
year = {2025},
issn = {0378-4754},
doi = {https://doi.org/10.1016/j.matcom.2024.12.001},
author = {Laura Selicato and Alessandro Pagano and Flavia Esposito and Matteo Icardi}
}

@misc{fashion_MNIST_dataset,
      title={Fashion-MNIST: a Novel Image Dataset for Benchmarking Machine Learning Algorithms}, 
      author={Han Xiao and Kashif Rasul and Roland Vollgraf},
      year={2017},
      eprint={1708.07747},
      archivePrefix={arXiv},
      primaryClass={cs.LG},
      url={https://arxiv.org/abs/1708.07747}, 
}

@book{gudhi
, title        = "{GUDHI} User and Reference Manual"
, author      = "{The GUDHI Project}"
, publisher     = "{GUDHI Editorial Board}"
, year         = 2015
, url =    "http://gudhi.gforge.inria.fr/doc/latest/"
}

@article{SBM,
  title = {A review of stochastic block models and extensions for graph clustering},
  volume = {4},
  ISSN = {2364-8228},
  url = {http://dx.doi.org/10.1007/s41109-019-0232-2},
  DOI = {10.1007/s41109-019-0232-2},
  number = {1},
  journal = {Applied Network Science},
  publisher = {Springer Science and Business Media LLC},
  author = {Lee,  Clement and Wilkinson,  Darren J.},
  year = {2019},
  month = dec 
}

@article{DRO,
  title = {Distributionally robust optimization},
  volume = {34},
  ISSN = {1474-0508},
  url = {http://dx.doi.org/10.1017/S0962492924000084},
  DOI = {10.1017/s0962492924000084},
  journal = {Acta Numerica},
  publisher = {Cambridge University Press (CUP)},
  author = {Kuhn,  Daniel and Shafiee,  Soroosh and Wiesemann,  Wolfram},
  year = {2025},
  month = jul,
  pages = {579-804}
}

@article{proximal_point_method,
  title = {Proximal Algorithms},
  volume = {1},
  ISSN = {2167-3918},
  url = {http://dx.doi.org/10.1561/2400000003},
  DOI = {10.1561/2400000003},
  number = {3},
  journal = {Foundations and Trends in Optimization},
  publisher = {Emerald},
  author = {Parikh,  Neal and Boyd,  Stephen},
  year = {2014},
  month = jan,
  pages = {127-239}
}

@article{Lyapunov,
  title = {A Lyapunov-type approach to convergence of the Douglas-Rachford algorithm for a nonconvex setting},
  volume = {73},
  ISSN = {1573-2916},
  url = {http://dx.doi.org/10.1007/s10898-018-0677-3},
  DOI = {10.1007/s10898-018-0677-3},
  number = {1},
  journal = {Journal of Global Optimization},
  publisher = {Springer Science and Business Media LLC},
  author = {Dao,  Minh N. and Tam,  Matthew K.},
  year = {2018},
  month = jun,
  pages = {83-112}
}

@book{TDA,
  title = {Algebraic Foundations for Applied Topology and Data Analysis},
  journal = {Mathematics of Data},
  publisher = {Springer International Publishing},
  author = {Schenck,  Hal},
  year = {2022}
}

@article{lloyd1982,
  author  = {Lloyd, Stuart P.},
  title   = {Least Squares Quantization in {PCM}},
  journal = {IEEE Transactions on Information Theory},
  volume  = {28},
  number  = {2},
  pages   = {129--137},
  year    = {1982},
  doi     = {10.1109/TIT.1982.1056489}
}

@article{jain2010,
  author  = {Jain, Anil K.},
  title   = {Data Clustering: 50 Years Beyond {K}-Means},
  journal = {Pattern Recognition Letters},
  volume  = {31},
  number  = {8},
  pages   = {651--666},
  year    = {2010},
  doi     = {10.1016/j.patrec.2009.09.011}
}

@article{vonluxburg2007,
  author  = {von Luxburg, Ulrike},
  title   = {A Tutorial on Spectral Clustering},
  journal = {Statistics and Computing},
  volume  = {17},
  number  = {4},
  pages   = {395--416},
  year    = {2007},
  doi     = {10.1007/s11222-007-9033-z}
}

@article{carlsson2009,
  author  = {Carlsson, Gunnar},
  title   = {Topology and Data},
  journal = {Bulletin of the American Mathematical Society},
  volume  = {46},
  number  = {2},
  pages   = {255--308},
  year    = {2009},
  doi     = {10.1090/S0273-0979-09-01249-X}
}

@book{edelsbrunner2010,
  author    = {Edelsbrunner, Herbert and Harer, John L.},
  title     = {Computational Topology: An Introduction},
  publisher = {American Mathematical Society},
  address   = {Providence, RI},
  year      = {2010},
  doi       = {10.1090/mbk/069}
}

@article{zomorodian2005,
  author  = {Zomorodian, Afra and Carlsson, Gunnar},
  title   = {Computing Persistent Homology},
  journal = {Discrete \& Computational Geometry},
  volume  = {33},
  number  = {2},
  pages   = {249--274},
  year    = {2005},
  doi     = {10.1007/s00454-004-1146-y}
}

@article{cohen-steiner2007,
  author  = {Cohen-Steiner, David and Edelsbrunner, Herbert
             and Harer, John},
  title   = {Stability of Persistence Diagrams},
  journal = {Discrete \& Computational Geometry},
  volume  = {37},
  number  = {1},
  pages   = {103--120},
  year    = {2007},
  doi     = {10.1007/s00454-006-1276-5}
}

@article{hensel2021,
  author  = {Hensel, Felix and Moor, Michael and Rieck, Bastian},
  title   = {A Survey of Topological Machine Learning Methods},
  journal = {Frontiers in Artificial Intelligence},
  volume  = {4},
  pages   = {681108},
  year    = {2021},
  doi     = {10.3389/frai.2021.681108}
}

@article{BCD,
  author  = {Xu, Yangyang and Yin, Wotao},
  title   = {A Block Coordinate Descent Method for Regularized 
             Multiconvex Optimization with Applications to 
             Nonnegative Tensor Factorization and Completion},
  journal = {SIAM Journal on Imaging Sciences},
  volume  = {6},
  number  = {3},
  pages   = {1758--1789},
  year    = {2013},
  doi     = {10.1137/120887795}
}

@article{huang2025inhomogeneous,
  title={Inhomogeneous Graph Trend Filtering via A $l_\{$2, 0$\}$-norm Cardinality Penalty},
  author={Huang, Xiaoqing and Ang, Andersen and Huang, Kun and Zhang, Jie and Wang, Yijie},
  journal={IEEE Transactions on Signal and Information Processing over Networks},
  year={2025},
  publisher={IEEE}
}

@article{Rousseeuw1987,
  author = {Rousseeuw, Peter J.},
  title = {Silhouettes: A graphical aid to the interpretation and validation of cluster analysis},
  journal = {Journal of Computational and Applied Mathematics},
  year = {1987},
  volume = {20},
  pages = {53--65},
  doi = {10.1016/0377-0427(87)90125-7},
  month = {nov}
}

@ARTICLE{kmeans_limitations,
  author={Hong, Jiazhen and Qian, Wei and Chen, Yudong and Zhang, Yuqian},
  journal={IEEE Transactions on Knowledge and Data Engineering}, 
  title={A Geometric Approach to $k$k-Means Clustering}, 
  year={2026},
  volume={38},
  number={3},
  pages={1442-1453},
  doi={10.1109/TKDE.2025.3616858}}

@article{ding2010convex,
  title={Convex and semi-nonnegative matrix factorizations},
  author={Ding, Chris and Li, Tao and Jordan, Michael I},
  journal={IEEE Transactions on Pattern Analysis and Machine Intelligence},
  volume={32},
  number={1},
  pages={45--55},
  year={2010},
  doi={10.1109/TPAMI.2008.277}
}

@article{cai2011graph,
  author  = {Cai, Deng and He, Xiaofei and Han, Jiawei and Huang, Thomas S.},
  title   = {Graph Regularized Nonnegative Matrix Factorization for Data Representation},
  journal = {IEEE Transactions on Pattern Analysis and Machine Intelligence},
  volume  = {33},
  number  = {8},
  pages   = {1548--1560},
  year    = {2011},
  doi     = {10.1109/TPAMI.2010.231}
}

@article{sinkhorn1967concerning,
  title={Concerning nonnegative matrices and doubly stochastic matrices},
  author={Sinkhorn, Richard and Knopp, Paul},
  journal={Pacific Journal of Mathematics},
  volume={21},
  number={2},
  pages={343--348},
  year={1967}
}
}

\normalsize

\appendix

\section{Derivation of Feasibility Constraints on $b$}
\label{sec:appendix_constraints}

Under the smoothed parametric representation $\bm{B}(a,b) = a\bm{S} + b(\ones_n\ones_n^\top - \bm{S})$, the ambiguity set $\cB$ (equation~\eqref{set:probability_set_according_to_DRO}) imposes three constraints on $a$ and $b$. Throughout, we substitute $a = 1 + b(1-n)$ (derived from row-stochasticity) to obtain constraints solely in $b$.

\paragraph{Row-stochasticity.}
Imposing $\bm{B}(a,b)\ones_n = \ones_n$ and using $\bm{S}\ones_n = \ones_n$:
\[
  (a\bm{S} + b(\ones_n\ones_n^\top - \bm{S}))\ones_n = a\ones_n + b(n-1)\ones_n = \ones_n,
\]
yielding $a + b(n-1) = 1$, i.e., $a = 1 + b(1-n)$. Positivity $a > 0$ then gives
\begin{equation}\label{eq:b_bound_a_pos_app}
  b < \tfrac{1}{n-1}.
\end{equation}

\paragraph{Nonnegativity.}
The condition $B(a,b)_{ij} \geq 0$ requires $aS_{ij} + b(1-S_{ij}) \geq 0$ for all $i,j$. The most restrictive bound is attained at $M \coloneqq \max_{i,j} S_{ij}$, yielding $a \geq -b(1-M)/M$. Substituting $a = 1 + b(1-n)$ gives
\begin{equation}\label{eq:b_bound_M_app}
  b \leq \tfrac{1}{n - 1/M}.
\end{equation}
Since each row of $\bm{S}$ is normalized, $1/n \leq M \leq 1$, hence $0 \leq n - 1/M \leq n-1$, so $1/(n - 1/M) \geq 1/(n-1)$, and constraint \eqref{eq:b_bound_M_app} is always dominated by \eqref{eq:b_bound_a_pos_app}.

\paragraph{Distance constraint.}
The Frobenius constraint $\|\bm{B}(a,b) - \bm{S}\|_F \leq r$ simplifies (using $a = 1 + b(1-n)$) to $\|b(\ones_n\ones_n^\top - n\bm{S})\|_F \leq r$. Letting $S_2 \coloneqq \sum_{i,j} S_{ij}^2$ (which satisfies $S_2 \geq 1$ for row-stochastic $\bm{S}$ with $S_2 > 1$ generically), this gives $b \leq r/(n\sqrt{S_2-1})$.

\paragraph{Final bounds.}
Combining and discarding the dominated nonnegativity constraint:
\begin{equation}\label{eq:b_bounds_app}
  0 \leq b \leq b_{\max} \coloneqq \min\!\left\{
    \tfrac{1}{n-1},\quad \tfrac{r}{n\sqrt{S_2-1}}
  \right\}.
\end{equation}

\end{document}